  \numberwithin{equation}{section}
  \definecolor{Gray}{gray}{0.9}
\newif\ifblog
\newif\iftex
\def\emph#1{\textit{#1}}
\def\bA{\mathbf{A}}
\def\bG{\mathbf{G}}
\def\bK{\mathbf{K}}
\def\bX{\boldsymbol X}
\def\ba{\mathbf{a}}
\def\be{\mathbf{e}}
\def\beps{{\boldsymbol\epsilon}}
\def\be{\mathbf{e}}
\def\bq{\mathbf{q}}
\def\bs{\mathbf{s}}
\def\bu{\mathbf{u}}
\def\bx{\mathbf{x}}
\def\by{\mathbf{y}}
\def\bz{\mathbf{z}}
\def\bbE{\mathbb{E}}
\def\bbP{\mathbb{P}}
\def\bbQ{\mathbb{Q}}
\def\bbR{\mathbb{R}}
\def\cA{\mathcal{A}}
\def\cE{\mathcal{E}}
\def\cH{\mathcal{H}}
\def\cL{\mathcal{L}}
\def\cN{\mathcal{N}}
\def\cR{\mathcal{R}}
\def\cS{\mathcal{S}}
\def\cX{\mathcal{X}}
\def\cY{\mathcal{Y}}
\def\cZ{\mathcal{Z}}
\def\cov{{\sf cov}}
\def\cf{{\it cf.}}
\def\Ex{{\bbE}}
\def\eg{{\it eg.}}
\def\eps{{\epsilon}}
\def\iid{{\it i.i.d.}}
\def\hmu{\widehat{\boldsymbol\mu}}
\def\hSigma{\widehat{\boldsymbol\Sigma}}
\def\ones{\mathbf{1}}
\def\reals{{\bbR}}
\def\tr{{\sf tr}}
\def\unif{{\sf unif}}
\newtheorem{theorem}{Theorem}[section]
\newtheorem{lemma}[theorem]{Lemma}
\newtheorem{definition}[theorem]{Definition}
\newtheorem{example}[theorem]{Example}
\def\hby{\widehat{\by}}
\def\HS{{\sf HS}}
\begin{document}

\begin{frontmatter}

\title{On conditional parity as a notion of non-discrimination in machine learning}
\runtitle{Conditional parity}

\begin{aug}
\author{\fnms{Ya'acov} \snm{Ritov}\ead[label=e1]{yritov@umich.edu}},
\author{\fnms{Yuekai} \snm{Sun}\ead[label=e2]{yuekai@umich.edu}},
\and
\author{\fnms{Ruofei} \snm{Zhao}\ead[label=e3]{rfzhao@umich.edu}}

\runauthor{Ritov, Sun, Zhao}

\affiliation{University of Michigan}

\end{aug}

\begin{abstract}
We identify conditional parity as a general notion of non-discrimination in machine learning. In fact, several recently proposed notions of non-discrimination, including a few counterfactual notions, are instances of conditional parity. We show that conditional parity is amenable to statistical analysis by studying randomization as a general mechanism for achieving conditional parity and a kernel-based test of conditional parity.
\end{abstract}



\end{frontmatter}

\section{Non-discrimination in machine learning}

As automated decision systems permeate our world, the problem of implicit biases in these systems have become more serious. Machine learning algorithms are routinely used to make decisions in credit, criminal justice, and education, all of which are domains protected by anti-discrimination law. Although automated decision systems seem to eliminate the biases of a human decision maker, they may perpetuate or even exacerbate biases in the data.

For example, consider an advertising platform which uses demographic information of visitors to a website to decide which credit card offers to show first-time visitors. If the system is trained on historical data where minority visitors were given less advantageous offers, the system may steer similar visitors to less advantageous offers, which is illegal \citep{Steel2010webs}.

In response, the scientific community has proposed several formal definitions of non-discrimination and various approaches to ensure algorithms are non-dis\-criminatory. Unfortunately, the myriad of definitions and approaches hinders the adoption of this work by practitioners: they must choose from the growing list of definitions and approaches, and there is often no clear choice.

In light of this plethora of definitions, we identify a general notion of non-discrimination in Section 2 that not only includes many recently proposed definitions but also suggests new definitions. In Sections 3 and 4, we study randomization as a general mechanism for achieving conditional parity and a kernel-based test of conditional parity. Finally, in Section 5, we apply this test to determine whether insurance companies charge higher premiums to insure cars in minority neighborhoods.

\section{Conditional parity: a notion of non-discrimination}

Intuitively, any claim of discrimination or non-discrimination depends on a comparison: a comparison between the outcome of two groups that differ only by a sensitive attribute. For instance, a claim of gender discrimination by a female employee of a consulting firm implies she was treated differently from male employees of the company in her position. Here the two groups are female and male employees that share her position. By only comparing herself to male employees in her position, she is implicitly permitting the firm to treat male employees in other positions differently. In other words, the firm is allowed to discriminate on an employee's position (\eg\ paying senior employees higher salaries). We see that {\it in order to fully specify the two groups, we must not only specify the protected attribute (\eg\ gender) but also specify the discriminatory attribute (\eg\ position)}.

\begin{definition}[conditional parity (CP)]
\label{def:conditional-parity}
A random variable $\bx$ satisfies pa\-rity with respect to $\ba$ conditioned on $\bz = z$ if the distribution of $\bx\mid\ba,\{\bz = z\}$ is constant in $\ba$:
\[
\cL(\bx\mid\ba = a,\bz = z) =\cL(\bx\mid\ba = a',\bz = z)\text{ for any }a,a'\in\cA.
\]
Similarly, $\bx$ satisfies parity with respect to $\ba$ conditioned on $\bz$ (without specifying a value of $\bz$) if it satisfies parity with respect to $\ba$ conditioned on $\bz = z$ for any $z\in\cZ$.
\end{definition}

In terms of independence, conditional parity is $\bx\perp\ba\mid\{\bz = z\}$. Table \ref{tab:segments} is a graphical representation of the groups in the running gender discrimination example. As we shall see, many existing notions of non-discrimination such as demographic parity, equalized odds, equalized opportunity, and counterfactual fairness are all instances of CP. We remark that the definition of CP is {\it invariant under post-processing}: if $\bx$ satisfies CP with respect to $\ba$ conditioned on $\bz = z$, then so does $f(\bx)$ for an arbitrary function $f$. This is especially desirable because it leads to a simple way of eliminating bias in machine learning algorithms.

\newcolumntype{g}{>{\columncolor{Gray}}l}
\begin{table}
  \centering
  \caption{A graphical representation of the groups implicit in the gender  discrimination example.
  }
  \vspace{6pt}
  \begin{tabular}{l|g|g|}
  \rowcolor{white} \multicolumn{1}{l}{} & \multicolumn{1}{l}{$\ba=$ female} & \multicolumn{1}{l}{$\ba=$ male} \\[12pt] \hhline{~|-|-|}
  \rowcolor{white} $\bz=$ analyst & & \\[12pt] \hhline{~|-|-|}
  $\bz=$ associate & & \\[1em] \hhline{~|-|-|}
  \rowcolor{white} \vdots           & & \\[12pt] \hhline{~|-|-|}
  \rowcolor{white} $\bz=$ VP        & & \\[12pt] \hhline{~|-|-|}
  \end{tabular}
  \label{tab:segments}
\end{table}

The intuition of identical conditional distributions that Definition \ref{def:conditional-parity} formalizes extends easily to yield approximate notions of non-discrimination. To keep things simple, we assume $\ba$ is discrete.

\begin{definition}[$\eps$-conditional parity]
\label{def:eps-conditional-parity}
Let $d$ be a metric on distributions.\footnote{Formally, the metric must satisfy $d(\bbP,\bbQ) = d(\bbQ,\bbP)$ and $d(\bbP,\bbQ) \ge 0$ for any pair of distributions $\bbP$ and $\bbQ$ as well as $d(\bbP,\bbP) =0 $ for any distribution $\bbP$.}
A random variable $\bx$ satisfies $\eps$-conditional parity with respect to $\ba\in\cA$ conditioned on $\bz = z$ if
\[
{\textstyle\max_{a,a'\in\cA}}d(\cL(x\mid\ba = a,\bz = z), \cL(x\mid\ba = a',\bz = z)) \le \eps\text{ for any }a,a'\in\cA.
\]
\end{definition}

To wrap up, we compare CP to two other notions of non-discrimination: functional blindness and individual fairness.

\begin{definition}[functional blindness]
\label{def:functional-bindness}
A decision rule $\delta:\cA\times\cZ\to\cX$ satisfies functional blindness with respect to $a$ iff
\[
\delta(a,z) = \delta(a',z)\text{ for any }a,a'\in\cA\text{ and }z\in\cZ.
\]
In other words, the decision rule has no functional dependence on the protected attribute.
\end{definition}

Functional blindness, also known as fairness through unawareness, is a rudimentary but widely used notion of non-discrimination. Although intuitive, it is a weak notion that is easily circumvented because it does not rule out implicit dependence of the decision on the protected attribute.

For decades, insurance companies have charged drivers in predominantly minority neighborhoods higher premiums than drivers in majority white neighborhoods. Although insurers have justified their pricing by citing a  higher risk of accidents in minority neighborhoods, consumer advocates suspect the practice is merely a way around laws that ban discriminatory rate-setting: a driver's zip code is a good proxy for his or her race in segregated areas.

We remark that functional blindness implies parity conditioned on $z$. However, if $z$ includes attributes that are proxies for the protected attribute (\eg\ zip code is a proxy for race in the preceding example), enforcing CP is vacuous. After all, by including an attribute in $z$, we are allowing the decision rule to discriminate based on it.

\begin{definition}[individual fairness \citep{Dwork2012Fairness}]
\label{def:lipschitz-fairness}
Let $d$ be a metric on distributions and $D$ be a metric on the space of individuals $\cX$. A (possibly randomized) decision rule $\delta$ satisfies $\eps$-individual fairness iff it is $\eps$-Lipschitz in $x$:
\[
d(\cL(\delta(x_1)),\cL(x_2)) \le \eps D(x_1,x_2)\text{ for any }x_1,x_2\in\cX.
\]
\end{definition}

Individual fairness is based on the principle that two similar individuals should be treated similarly by the decision rule. The precise definition of individual fairness depends crucially on the choice of the metrics $d$ and $D$. \cite{Dwork2012Fairness} suggest the metrics be chosen by a regulatory body or proposed by civil rights organizations and left open to discussion and continual refinement.

Both CP and individual fairness formalize the intuition that similar individuals should be treated similarly. In CP, similar individuals are those that share discriminatory attributes. In individual fairness, similar individuals are determined by the choice of the the metric on individuals. Although \cite{Dwork2012Fairness} does not distinguish between disriminatory and protected attributes, it is possible to encode the distinction into the choice of metric on $\cX$.

\subsection{Demographic parity and equalized odds}

In this subsection, we describe several factual (as opposed to counterfactual) notions of non-discrimination and show that they are instances of CP.

\begin{definition}[demographic parity (DP)]
  The outcome $\bx$ satisfies demographic parity if
  \[
  \cL(\bx \mid \ba=a) = \cL(\bx \mid \ba=a')\text{ for any }a,a'\in\cA.
  \]
\end{definition}

As we can see, there is no discriminatory attribute in DP, and it is required that individuals from the group $\{\ba=a\}$ has to be treated equally as individuals from the group $\{\ba=a'\}$. Although in general, DP seems too coarse a notion of non-discrimination, there are some scenarios where it is suitable. For example, in the allocation of public resources, DP is a fitting notion of non-discrimination. A concrete example is public secondary school admission. Due to the public service nature of public secondary education, parents should be allowed to send their children to any school in their neighborhood, regardless of their background. In reality, such goal is often attained by lottery, meaning that random selections in the pool of applicants are made.

\begin{example}[War on Drugs]
\label{ex:war-on-drugs}
According to the American Civil Liberties Union (ACLU), ``an African American adult is 2.8 times as likely to have a misdemeanor cannabis charge filed against him or her than does an Anglo American adult'' in Washington State \citep{Jensen2016Field}. By comparing the likelihood of being charged without stratifying the population (\eg\ by prevalence of cannabis consumption), the ACLU is claiming the War on Drugs violates DP. This is an example where DP is not a suitable notion of non-discrimination: the disparity between the likelihood of being charged may be due to disparities between prevalence of cannabis consumption. Thus it is incorrect to conclude the targeting of African Americans by law enforcement from violation of DP.
\end{example}

To avoid the problems of DP, we first segment the population by certain discriminatory attributes (\eg\ prevalence of cannibis consumption in the War on Drugs example) and then apply DP to each segment of the population. This led us to the notion of CP. In supervised learning, a natural instantiation of CP is {\it equalized odds}, which appeared in \cite{Hardt2016} and \cite{Zafar2017}.

\begin{definition}[equalized odds (EO) \cite{Hardt2016}]
A prediction $\hby$ of $\by\in\cY$ satisfies {\textit equalized odds} with respect to protected attribute $\ba$ and outcome $\by$ if
  \[
  \cL(\hby \mid \ba=a, \by=y) = \cL(\hby \mid \ba=a', \by=y)\text{ for any }a,a'\in\cA\text{ and }y\in\cY.
  \]
\end{definition}

In terms of CP, EO is equivalent to the prediction $\hby$ satisfying parity with respect to the protected attributed $\ba$ conditioned on the outcome $\by$. In other words, EO requires the individuals with the same $\by$ but differing $\ba$ to be treated equally. If both $\by$ and $\hby$ are binary, then in standard terminology, EO means that the probabilities of false alarms and the detection probabilities are the same under all possible values of $\ba$. Note that typically the optimal ROC curve depends on the value of $\ba$, and oftentimes, we sacrifice some efficiency to achieve EO through randomization \cite{Hardt2016}.

\begin{example}[Example \ref{ex:war-on-drugs} continued]
In the War on Drugs example, $\hby$ is whether an individual is charged, $\ba$ is an individual's race, and $\by$ may be whether an individual consumes cannabis. If there is a discrepancy between the prevalence of cannibis consumption among African and Anglo Americans, then EO is more suitable notion of non-discrimination in law enforcement. Since $\by$ depends on $\ba$, even perfect prediction $\hby = \by$ violates DP, but it is hardly discriminatory for law enforcement to charge anyone who consumes cannabis with a misdemeanor. On the other hand, it is easy to check that the perfect predictor satisfies EO.
\end{example}

In some applications, one of the outcomes $y\in\cY$ is considered ``advantaged''. For example, consider the use of historical repayment data to predict default. If the historical data contains biases against minority groups, the prediction system may echo the bias in its predictions. A possible relaxation of EO is to require people who will not default to have equal chance of getting a loan, regardless of their race.

\begin{definition}[equal opportunity \cite{Hardt2016}]
  Let $\by=1$ be the ``advanta\-ged'' outcome. A prediction $\hby$ satisfies {\textit equalized opportunity} with respect to protected attribute $\ba$ and outcome $\by$ if
  \[
  \cL(\hby \mid \ba=a, \by=1) = \cL(\hby \mid \ba=a', \by=1)\text{ for any }a,a' \in \cA.
  \]
\end{definition}

It is easy to see how equalized odds leads to the more general notion of CP. The key idea of comparing segments of the population that share discriminatory attributes but differ in the protected attribute is clear. In classification, there is a natural discriminatory attribute: the outcome $\by$. However, it is worth considering other ways of segmenting the population, even in supervised learning.

\begin{example}[gender bias in UC Berkeley admissions \cite{Bickel1977Sex}]
In the autumn of 1973, the graduate division of UC Berkeley admitted 44\% of male applicants but only 35\% of female applicants, prompting allegations of gender bias in the admissions process. However, adjusting the admissions outcome by department reveals a ``small but statistically significant bias in favor of women''. \cite{Bickel1977Sex} concluded that women tended to apply to highly competitive departments, which admit a smaller percentage of applicants, while men tended to apply to less competitive departments. In this example, including department as a discriminatory attribute leads to a qualitatively different conclusion.
\end{example}

As an aside, this example also shows that CP generally does not imply DP. Even if the admission rates of male and female applicants are identical in all departments, the admission rates to the graduate division may still differ if male and female applicants apply to departments at different rates. Conversely, even if the admission rates of male and female applicants to the graduate division are identical, the admission rates to each department may differ. This reveals another problem of DP: it permits disparate treatment within segments of the population as long as the disparities ``cancel out'' on average. Although this is rare in practice, we point it out to emphasize CP and DP are generally incomparable.

Finally, to highlight the generality of CP, we describe an application of CP in representation learning. In machine learning, feature or representation learning is the task of learning a transformation of raw data to a feature vector that is amenable to machine learning algorithms. By letting $\bx\in\reals^d$ be the learned feature vector, CP readily leads to a notion of non-discrimination in representation learning:
\[
\cL(\bx\mid\ba = a,\bz = z) =\cL(\bx\mid\ba = a',\bz = z)\text{ for any }a,a'\in\cA\text{ and }z\in\cZ.
\]
As we shall see, this notion of non-discrimination has been implicitly used in natural language processing (NLP).

To wrap up, we describe a post-processing method that returns a new feature vector that satisfies CP. To keep things simple, we assume $\bx\in\reals^d$, $\ba\in\reals^{k_1}$, and $\bz\in\reals^{k_2}$ are jointly Gaussian. Without loss of generality, let
\[
\beps^T = \bx^T - \bz^TB - \ba^T\Gamma,
\]
where $B\in\reals^{k_2\times d}$ and $\Gamma\in\reals^{k_1\times d}$ are chosen so that $\Ex\bigl[\beps\mid\bz,\ba\bigr] = 0$. Rearranging, we have
\[
\bx^T = \ba^TB + \bz^T\Gamma + \beps^T.
\]
A new feature vector $\bx'$ that satisfies $\bx'\perp\ba\mid\bz$ is
\[
\bx' = (I_d - P_B)\bx,\quad P_B = B^T(B^T)^\dagger.
\]
One way to estimate $\cR(B^T)$ is to select a subset of feature vectors that are similar in $\bz$ and compute their principal components.
This is essentially the approach proposed by \cite{Bolukbasi2016} to remove gender bias in word embeddings.

\begin{example}[debiasing word embeddings \citep{Bolukbasi2016}]
A word embedding is a representation of words by vectors in $\reals^d$. Word embeddings enable machine learning algorithms to reason semantically by performing arithmetic operations on the word embeddings; \eg
\[
{\sf grandfather} - {\sf man} + {\sf woman} = {\sf grandmother}.
\]
They are learned from text corpus and inherit implicit biases in the texts. For example, according to the the popular word2vec embedding, which is trained on a corpus of Google News articles, we have
\[
{\sf engineer} - {\sf man} + {\sf woman} = {\sf homemaker}.
\]
To remove gender bias in word embeddings, \cite{Bolukbasi2016} propose a method that identifies a gender subspace and projects the embedding onto the orthocomplement of the gender subspace to obtain a debiased word embedding. To identify the gender subspace, the method takes pairs of words whose meanings differ only in gender (\eg\ (actor, actress), (father, mother)) and estimates the principal compoments of the pairwise differences.
\end{example}

By the invariance of CP under post-processing, the output of a machine learning algorithm based on features that satisfy CP inherits the property.  This suggests using non-discriminatory features as a simple approach to eliminating bias in machine learning algorithms.

\subsection{Counterfactual notions of non-discrimination}

In order to work with counterfactuals, we must impose some modeling assumptions on the data generating process. In the rest of this subsection, we assume the data is generated by a structural equations model (SEM). A SEM consists of (i) a set of random variables, (ii) a set of (deterministic) equations that assign values to some random variables, (iii) a probability distribution that assigns values to the rest of the variables. The variables whose values are assigned by the probability distribution are called exogenous.

SEM's are conveniently represented as directed acyclic graphs (DAG). The nodes represent random variables, and the edges represent {\it direct} causal relationships between variables: there is an edge from node $i$ to node $j$ if the equation that assigns value to variable $j$ takes variable $i$ as input. The nodes that have no parents represent the exogenous variables.

To sample from an SEM, we start by assigning values to the root nodes by sampling from the probability distribution and recursively assign values to the other nodes by the equations. Thus the nodes whose values are assigned by equations are random variables on the probability space ``generated by'' the exogenous variables.
In this setting, counterfactuals are defined as random variables whose values are assigned by a modified SEM, where the equations and/or the probability distribution are modified according to the premise of the counterfactual. We wrap up our brief overview of counterfactuals with an example and refer to \cite{Pearl2016}, Chapter 4 for further details.

\begin{example}
Consider the intervention $\ba\gets a$ and the counterfactual $\by_{\ba\gets a}$ in the SEM depicted in Figure \ref{fig:pre-intervention}. The counterfactual is the counterpart of $\by$ in the modified SEM depicted on the right of Figure \ref{fig:post-intervention-1}, in which the equation that assigns the value of $\ba$ is replaced by the equation $\ba = a$. We see that the value of $\by_{\ba\gets a}$ ultimately depends on the values of the exogeneous variables in the SEM, making it a random variable on the same probability space as $\by$.
Thus it is possible to evaluate ``cross-SEM'' probabilities such as $\cL(\by_{\ba\gets a}\mid\by = y)$.
We remark that this SEM formalism allows us to study the effects of more sophisticated interventions such as $\ba\sim\bbP_a$ (\cf\ Figure \ref{fig:post-intervention-2}).

\begin{figure}
  \centering
  \begin{subfigure}[b]{0.32\textwidth}
    \tikz{
      \node[latent]                            (u) {$\bu$};
      \node[latent, below=of u, xshift=-1.2cm] (a) {$\ba$};
      \node[latent, below=of u, xshift=1.2cm]  (z) {$\bz$};
      \node[latent, below=of a, xshift=1.2cm]  (y) {$\by$};
      \edge {u} {a,z}
      \edge {a} {z}
      \edge {z} {y}
    }
    \caption{}
    \label{fig:pre-intervention}
  \end{subfigure}
  \begin{subfigure}[b]{0.32\textwidth}
    \tikz{
      \node[latent]                            (u) {$\bu$};
      \node[obs, below=of u, xshift=-1.2cm]    (a) {$a$};
      \node[latent, below=of u, xshift=1.2cm]  (z) {$\bz_{\ba\gets a}$};
      \node[latent, below=of a, xshift=1.2cm]  (y) {$\by_{\ba\gets a}$};
      \edge {u} {z}
      \edge {a} {z}
      \edge {z} {y}
    }
    \caption{}
    \label{fig:post-intervention-1}
  \end{subfigure}
  \begin{subfigure}[b]{0.32\textwidth}
    \tikz{
      \node[latent]                            (u) {$\bu$};
      \node[latent, below=of u, xshift=-1.2cm] (a) {$\ba_{\ba\sim\bbP_a}$};
      \node[latent, below=of u, xshift=1.2cm]  (z) {$\bz_{\ba\sim\bbP_a}$};
      \node[latent, below=of a, xshift=1.2cm]  (y) {$\by_{\ba\sim\bbP_a}$};
      \edge {u} {z}
      \edge {a} {z}
      \edge {z} {y}
    }
    \caption{}
    \label{fig:post-intervention-2}
  \end{subfigure}
  \label{fig:intervention}
\end{figure}
\end{example}

In the rest of this subsection, we describe two counterfactual notions of non-discrimination. The first was proposed recently by \cite{Kusner2017Counterfactual}, while the second is suggested by CP. To keep things simple, we specialize to supervised learning and focus on prediction.

\begin{definition}[counterfactual fairness (CF) \cite{Kusner2017Counterfactual}]
\label{def:cf}
A prediction $\hby$ is counterfactually fair with respect to sensitive attribute $\ba$ in light of evidence $\{\be = e\}$ iff
\begin{equation}
\cL(\hby_{\ba \gets a} \mid \be=e) = \cL(\hby_{\ba \gets a'} \mid \be=e)\text{ for any }a,a'\in\cA.
\label{eq:cf}
\end{equation}
If \eqref{eq:cf} holds for all $e \in \cE$, $\hby$ is counterfactually fair with respect to sensitive attribute $\ba$ in light of evidence $\be$.
\end{definition}

In Definition \ref{def:cf}, $\be$ is the evidence we observe in the real world. Although it plays the part of $\bz$ in CP, we call it evidence and denote it by $\be$ to emphasize it is observed. To see that CF is an instance of CP, let $\hby_\ba$, $\ba_\ba$ be the counterparts of $\hby$, $\ba$ in a modified SEM, where the step that assigns value to $\ba$ is replaced by $\ba\sim\unif(\cA)$, and note that \eqref{eq:cf} is equivalent to
\begin{equation}
\cL(\hby_\ba \mid \be=e, \ba_\ba = a) = \cL(\hby_\ba \mid \be=e, \ba_\ba = a')\text{ for any }a,a'\in\cA,
\label{eq:cf2}
\end{equation}
We remark that the law of the intervention is unimportant because we condition on the value of $\ba$. We pick $\ba\sim\unif(\cA)$ to keep things concrete.

The notion of CF is best illustrated by the following case on employment discrimination. In \cite{1996Carson}, the judges wrote ``the central question in any employment-discrimination case is whether the employer would have taken the same action had the employee been of a different race (age, sex, religion, national origin, etc.) and everything else had been the same''. In other words, to ascertain whether discrimination occurred, the judges compared the employee with his counterpart in a counterfactual world, rather than a similar employee in the real world.

We remark that it may not be possible to follow the judges directive literally and keep all other attributes the same: the intervention $\ba\gets a$ may propagate in the modified SEM and lead to discrepancies with the evidence. For example, consider a female employee who is homosexual. In a counterfactual world where she is male, it is not possible to keep both her sexual orientation and the gender she is attracted the same as hers in the real world.

As we saw, CF is an instance of CP where we segment the population by observable evidence. A related notion of non-discrimination is equalized counterfactual odds (ECO): it is an instance of CP that segments the population by counterfactual attributes. It is motivated by Example \ref{ex:driving-risk}.

\begin{example}
\label{ex:driving-risk}
Consider a system that predicts a driver's accident risk from his or her driving record. The prediction $\hby$ depends directly on a driver's driving record $\bz$, which in turn depends on the driver's driving ability $\bu$.
Driving ability also directly affects a driver's accident risk $\by$ and whether he or she is disabled $\ba$ (poor drivers tend to get into accidents,  which cause disabilities). Figure \ref{figeco} is a DAG that depicts the functional dependencies among the variables.

\begin{figure}
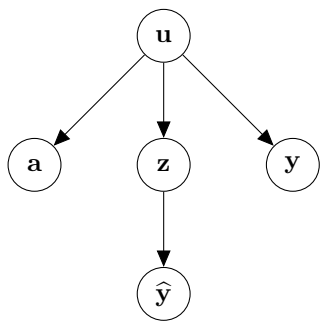

  \tikz{
    \node[latent] (u) {$\bu$};
    \node[latent, below=of u] (z) {$\bz$};
    \node[latent, left=of z] (a) {$\ba$};
    \node[latent, right=of z] (y) {$\by$};
    \node[latent, below=of z] (hy) {$\hby$};
    \edge {u} {z,a,y};
    \edge {z} {hy}
    }
  \caption{SEM of accident risk prediction (Example \ref{ex:driving-risk})}
  \label{figeco}
\end{figure}

This example does not satisfy EO: the path $\ba\gets\bu\to\bz\to\hby$ between $\ba$ and $\hby$ is not blocked. However, $\hby$ is intuitively non-discriminatory:
there is only dependence between $\ba$ and $\hby$ because driving ability is a parent of disability and driving record. the prediction $\hby$ has no causal dependence on $\ba$ does not penalize good drivers that happen to be disabled.

We see that EO is too stringent a condition in this scenario. It not only prohibits the prediction from treating disabled drivers differently because of their disability, but also prohibits the prediction from happening to put disabled drivers in an disadvantaged position due to the presence of a confounder. In this scenario, disabled people happen to have worse driving records because driving ability affects one's driving record and causes disability.
\end{example}

Example \ref{ex:driving-risk} shows that EO is too stringent because it prohibits probabilistic dependence between $\hby$ and $\ba$, which may arise due to confounding. The notion of equalized counterfactual odds is an amendment of EO that only prohibits causal relationships between $\hby$ and $\ba$.

\begin{definition}[equalized counterfactual odds (ECO)]
A prediction $\hby$ of $\by\in\cY$ satisfies equalized counterfactual odds with respect to protected attribute $\ba$ conditioned on $\by_\ba=y$ iff
\begin{equation}
\cL(\hby_\ba \mid \by_\ba=y, \ba_\ba = a) = \cL(\hby_\ba \mid \by_\ba=y, \ba_\ba = a')\text{ for all }a,a'\in\cA.
\label{def:eco}
\end{equation}
If \eqref{def:eco} holds for all $y \in \cY$, we say $\hby$ satisfies equalized counterfactual odds with respect to $\ba$.
\end{definition}

In a nutshell, ECO is EO on a modified SEM, in which the step that assigns value to $\ba$ is replaced by $\ba\sim\bbP_a$. The graph of the modified SEM is identical to that of the original SEM, except all the edges that point to $\ba$ are removed. This removes all back door paths between $\hby$ and $\ba$, which typically represent the effects of confounders. Thus ECO only prohibits probabilistic dependence between $\hby$ and $\ba$ in the original SEM through front door paths. This leads to a simple way of verifying ECO.

\begin{lemma}
A prediction $\hby$ of $\by$ satisfies ECO with respect to protected attribute $\ba$ if any front door paths from $\ba$ to $\hby$ are blocked by $\by$.
\end{lemma}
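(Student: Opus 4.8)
The plan is to read the defining identity of ECO as a conditional independence statement in the DAG of the intervened SEM and then establish that independence via d-separation. By definition, $\hby$ satisfies ECO with respect to $\ba$ (conditioned on $\by_\ba$) exactly when $\hby_\ba\perp\ba_\ba\mid\by_\ba$ in the modified SEM in which the structural equation for $\ba$ is replaced by an independent draw $\ba\sim\bbP_a$; to be concrete we may take $\bbP_a=\unif(\cA)$ so that every event $\{\ba_\ba=a\}$ has positive probability and the conditional laws are well defined. Let $G$ denote the DAG of the original SEM and $G'$ the DAG of the modified SEM, so that $G'$ is obtained from $G$ by deleting every edge pointing into $\ba$; in particular $\ba$ is a root node of $G'$. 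The modified SEM still has mutually independent exogenous variables, so its joint law factorizes according to $G'$, and by soundness of d-separation (i.e.\ the global Markov property of the modified SEM's DAG) it suffices to show that $\{\by\}$ d-separates $\ba$ and $\hby$ in $G'$.

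The key step is to characterize the $\ba$--$\hby$ paths in $G'$. Let $\pi$ be any path between $\ba$ and $\hby$ in $G'$. Since $\ba$ is a root of $G'$, $\pi$ must leave $\ba$ along an outgoing edge, so $\pi$ is a front-door path from $\ba$ to $\hby$ in the sense used in the preceding discussion. Moreover $\pi$ meets $\ba$ only at its endpoint, hence none of its edges is an edge into $\ba$, so $\pi$ is also a path of $G$. By hypothesis every front-door path from $\ba$ to $\hby$ is blocked by $\by$ in $G$, and therefore $\pi$ is blocked by $\{\by\}$ in $G$.

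It remains to argue that $\pi$ stays blocked when we pass from $G$ to $G'$. The collider/non-collider status of each node along $\pi$, and the membership of a non-collider of $\pi$ in $\{\by\}$, depend only on $\pi$ itself, which is common to $G$ and $G'$; so the only way $\pi$ could become unblocked is if some collider $c$ of $\pi$ with no descendant in $\{\by\}$ within $G$ were to acquire $\by$ as a descendant in $G'$. But passing from $G$ to $G'$ only deletes edges, so $\mathrm{desc}_{G'}(c)\subseteq\mathrm{desc}_{G}(c)$, and this cannot happen. Hence every $\ba$--$\hby$ path in $G'$ is blocked by $\{\by\}$, so $\{\by\}$ d-separates $\ba$ and $\hby$ in $G'$, which yields $\hby_\ba\perp\ba_\ba\mid\by_\ba$, i.e.\ ECO.

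The main obstacle is precisely the bookkeeping of the last step: one must check that the intervention, which strips the arrows into $\ba$, cannot reactivate a front-door path that was blocked in the original model — for instance by making a hitherto-harmless collider have $\by$ among its descendants. The observation that deleting edges can only shrink descendant sets is what closes this gap; the rest is routine once ECO is recognized as a conditional independence in the intervened SEM and one appeals to the standard soundness of d-separation.
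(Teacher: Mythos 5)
Your proof is correct and follows the same route the paper sketches in the paragraph preceding the lemma: read ECO as $\hby_\ba\perp\ba_\ba\mid\by_\ba$ in the intervened SEM, note that deleting the edges into $\ba$ leaves only front-door paths, and conclude by d-separation. Your explicit check that a path blocked in $G$ stays blocked in $G'$ (because edge deletion can only shrink descendant sets, so no collider becomes activated) is a careful detail the paper glosses over, but it is the same argument.
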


ECO is also closely related to CF: both compare the law of the counterfactual prediction $\hby_\ba$ on segments of the population. ECO segments the population by the counterfactual target $\by_{\ba}$, while CF segments the population by (observable) evidence $\be$. In practice, CF is a fairly stringent notion of non-discrimination. As \cite{Kusner2017Counterfactual} point out, there are instances in which perfect prediction does not satisfy CF. On the other hand, perfect prediction always satisfies ECO. To wrap up, we present another example that highlights the difference between the two notions.

\begin{example}
\label{ex:priest-hiring}
Consider an SEM of a church's priest hiring process. The church's hiring decision $\hby = \ones\{\bz \ge 1.8\}$ depends on an applicants score $\bz=\ba \bu$, where $\bu \sim \text{unif}(0,2)$ is the applicant's propensity for priest work and $\ba\in\{0,1\}$ is whether the applicant is Christian. Figure \ref{fig:ceo} depicts an SEM of the priest hiring process.

\begin{figure}
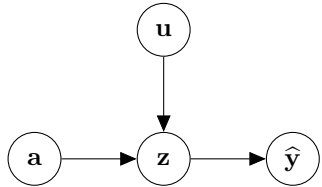

  \tikz{
    \node[latent] (u) {$\bu$};
    \node[latent, below=of u] (z) {$\bz$};
    \node[latent, left=of z] (a) {$\ba$};
    \node[latent, right=of z] (hy) {$\hby$};
    \edge {u,a} {z}
    \edge {z} {hy}
  }
  \caption{SEM of priest hiring process (Example \ref{ex:priest-hiring})}
  \label{fig:ceo}
\end{figure}

Consider an atheist applicant whose talents are well-suited to priest work (\eg\ charismatic, persuasive, $\bu = 1.9$). He applied for the position, but was rejected. Since he would have been hired if he was a Christian, the hiring process is not counterfactually fair with respect to $\ba$ in light of evidence $\bz$. Graphically, conditioning on $\bz$ in the unmodified SEM does not block the path between $\ba_a$ and $\hby_a$ in the graph of the modified SEM. On the other hand, the hiring process clearly satisfies ECO: $\bz$ blocks the only front door path between $\ba$ and $\hby$ in the graph of the (unmodified) SEM.
\end{example}

Before moving on, we mention a few recently proposed counterfactual notions of non-discrimination. \cite{Zhang2016causal} and \cite{Nabi2017Fair} formalize discrimination as the presence of path specific effects (\cf\ \cite{Pearl2009Causality}, \S 4.5.3). Although path-specific notions of non-discrimination are also instances of CP, we skip the details here. \cite{Kilbertus2017Avoiding} addresses the difficulty of modeling and determining the effect of intervening on protected attributes by considering non-discrimination with respect to proxies of protected attributes.


To wrap up, we cite a few related works on non-discrimination in machine learning. DP as a notion of non-discrimination was studied in \cite{Zemel2013Learning}. \cite{Friedler2016im} extends the notion of individual fairness to distinguish between {\it constructs}, which are unobservable attributes (\eg\ intelligence), and {\it observations} (\eg\ score on IQ test), which are proxies of constructs that enter into the algorithm. \cite{Berk2017Fairness} reviews various notions of non-discrimination in the criminal justice system.

\section{Conditional parity by randomization}
\label{sec:randomization}

In supervised learning, we observe realizations of $(\ba,\bs,\by)$, where $\ba\in\{0,1\}$ is the protected attribute, $\bs\in\reals^d$ is a score, $\by$ is the outcome. In general, $\bs$ is dependent on $\ba$. If we wish to obtain a prediction that does not depend on the protected attribute we have to sacrifice some efficiency and use a randomized procedure. In this section we consider the construction of a (randomized) decision rule $\hby=\hby(\bs,\ba)$ such that ${\cal L}(\hby\mid \ba=a,\by=y)$ does not depend on $a$.

Assume for simplicity that $\bs$ is discrete. In that case let $f_{ya}$ be the probability vector corresponding to ${\cal L}(\bs\mid \ba=a,\by=y)$. A non-discriminatory randomization is a pair of Markov kernels, $K_0,K_1\in \mathbb{R}^{k\times k_1}$ satisfying
\begin{equation}
\label{eq:const}
  \begin{split}
     f_{y1}K_1 &= f_{y0}K_0, \qquad y\in\{0,1\}
     \\
      K_0(i,j), K_1(i,j) &\ge 0,\qquad i=1,\dots,k,\; j=1,\dots, k_1
      \\
      \sum_{j=1}^{k_1}K_m(i,j)&= 1,\qquad i=1,\dots,k, m=0,1.
  \end{split}
\end{equation}
The minor difficulty is due to the fact that the conditional density given $\by$ should be checked, while the randomization cannot depend on $\by$ which is unobserved at the time of the randomization.
\begin{lemma}
  In general, we need to randomize the score of both categories to achieve EO.
\end{lemma}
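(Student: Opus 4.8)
The plan is to prove the lemma by producing an instance of \eqref{eq:const} in which equalized odds cannot be attained unless both $K_0$ and $K_1$ differ from the identity. ``Not randomizing the score of category $0$'' means taking $K_0 = I$ (so that $\hby = \bs$ when $\ba = 0$), which by \eqref{eq:const} requires a row-stochastic $K_1$ with $f_{y1}K_1 = f_{y0}$ for $y\in\{0,1\}$; symmetrically, ``not randomizing category $1$'' means $K_1 = I$ and requires a row-stochastic $K_0$ with $f_{y0}K_0 = f_{y1}$ for $y\in\{0,1\}$. So it suffices to exhibit four conditional laws for which neither of these two systems is feasible. I would take a binary score, $k = k_1 = 2$, with
\[
f_{01} = (0.9,\,0.1),\quad f_{11} = (0.3,\,0.7),\qquad f_{00} = (0.6,\,0.4),\quad f_{10} = (0.1,\,0.9).
\]
Since $f_{00}$ and $f_{10}$ are supported on $\{1,2\}$ while $f_{01}$ and $f_{11}$ have full support, any admissible $K_1$ (resp.\ $K_0$) must route all mass into $\{1,2\}$, so it may be taken to be a $2\times 2$ stochastic matrix.

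The next step is the pair of two-variable feasibility checks. Writing $K_1$ with rows $(1-p,p)$ and $(q,1-q)$, and using that the rows of $K_1$ and each $f_{ya}$ sum to $1$, the requirements $f_{01}K_1 = f_{00}$ and $f_{11}K_1 = f_{10}$ collapse to the linear system $9p - q = 3$, $3p - 7q = 2$, whose only solution is $p = \tfrac{19}{60}$, $q = -\tfrac{3}{20}$; since $q < 0$ there is no stochastic $K_1$. Symmetrically, writing $K_0$ with rows $(1-r,r)$ and $(s,1-s)$, the requirements $f_{00}K_0 = f_{01}$ and $f_{10}K_0 = f_{11}$ collapse to $3r - 2s = -\tfrac32$, $r - 9s = -2$, with only solution $s = \tfrac{9}{50}$, $r = -\tfrac{19}{50} < 0$, so there is no stochastic $K_0$ either. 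Hence in this instance equalized odds forces randomization of both scores, which proves the lemma.

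To close I would record why such an instance exists: a single $K_1$ with $f_{y1}K_1 = f_{y0}$ for both $y$ exists precisely when the binary experiment $(f_{01},f_{11})$ is at least as informative (in Blackwell's sense) as $(f_{00},f_{10})$, i.e.\ when the ROC curve of the former lies above that of the latter, and the numbers above are chosen so the two ROC curves cross: the thresholding rule ``predict $1$ iff score $=2$'' gives the operating point $(0.1,0.7)$ for the first experiment, which lies strictly above the second experiment's ROC, while the analogous point $(0.4,0.9)$ for the second lies strictly above the first experiment's ROC, so neither experiment Blackwell-dominates the other. The only genuine work is picking numbers whose ROC curves cross; the one subtle point is the scope of the statement, since the trivial constant kernels $K_0(i,\cdot) = K_1(i,\cdot) = (1,0)$ always satisfy \eqref{eq:const} (they merely produce a constant, useless predictor), so the lemma must be read as ``at least one of the two scores cannot be left unrandomized,'' which is exactly what the two feasibility checks establish.
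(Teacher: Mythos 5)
Your proof is correct, and the arithmetic checks out: for your choice of $f_{ya}$ the system $f_{y1}K_1=f_{y0}$ forces $q=-\tfrac{3}{20}$ and the system $f_{y0}K_0=f_{y1}$ forces $r=-\tfrac{19}{50}$, so neither one-sided garbling is stochastic, and hence neither $K_0$ nor $K_1$ in \eqref{eq:const} can be the identity (modulo the degenerate constant-kernel solutions, which you correctly flag and set aside). The paper proves the same statement by a more abstract route: it invokes two functionals that are monotone under the action of a Markov kernel --- the $L_1$ (total variation) distance between the two conditional laws, and the supremum of the likelihood ratio --- and observes that one can choose densities with $\|f_{10}-f_{00}\|_1\ll\|f_{11}-f_{01}\|_1$ while $f_{10}/f_{00}\gg f_{11}/f_{01}$ on a small set, so that the first inequality rules out $f_{y0}K=f_{y1}$ and the second rules out $f_{y1}K=f_{y0}$. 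Conceptually both arguments are instances of the same idea (neither binary experiment Blackwell-dominates the other), but they differ in execution: the paper's data-processing argument is a sketch that leaves the existence of suitable densities to the reader, whereas yours is a fully explicit, finitely verifiable counterexample; moreover your closing remark that, for binary $\by$, existence of the required garbling is \emph{equivalent} to ROC containment (Blackwell--Sherman--Stein for dichotomies) is sharper than the paper's two necessary conditions and explains exactly when the phenomenon occurs. The only caveat is the one you already identified: the lemma's ``in general'' must be read as ``there exist instances where both scores must be randomized,'' which is what both proofs establish.
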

\begin{proof}
  The lemma follows from the fact that a Markov kernel is a contraction in two measures, and in general they are unrelated.  Consider  two sets of densities such that $\|f_{10}-f_{00}\|_1\ll\|f_{11}-f_{01}\|_1$, but on a small interval, small enough such that it has a little contribution to the $L_1$ distance,  $f_{10}(x)/f_{00}(x)\gg f_{11}(x)/f_{01}(x)$.   If it was that  $f_{y0}K=f_{y1}$ 
  \begin{equation*}
    \begin{split}
       \|f_{11}-f_{01}\|_1 & = \sum_j \Bigl| \sum_i \bigl(f_{10}(i)-f_{00}(i)\bigr)K_1(i,j)\Bigr|
       \\
         &\le \sum_i\big|f_{11}(i)-f_{01}(i)\bigr|\sum_j K(i,j) = \|f_{11}-f_{01}\|_1.
\end{split}
\end{equation*}
which contradict the assumption. On the other hand, if it was that  $f_{y1}K=f_{y0}$
\begin{equation*}
    \begin{split}
         \max_j \frac{f_{10}(j)}{f_{00}(j)} &= \max_j\frac{\sum_i f_{11}(i)K_1(i,j)}{\sum_i f_{01}(i)K_1(i,j)}
         \\
         &=\max_j\frac{\sum_i \frac {f_{11}(i)}{f_{01}(i)}f_{01}(i)K_1(i,j)}{\sum_i f_{01}(i)K_1(i,j)}
         \\
         &\le \max_i \frac{f_{11}(i)}{f_{01}(i)},
    \end{split}
  \end{equation*}
 which contradict the other assumption. Hence neither it is that  $f_{y0}K=f_{y1}$ nor that  $f_{y0}K=f_{y1}$. 
  
  \end{proof}

 The set  \eqref{eq:const} has $2(k+k_1-1)$ equality constraints with $2kk_1$ undefined parameters. It always has the trivial solution.  Adding a linear cost function, \eg\
 \begin{equation}\label{eq:target}
   \sum_m\sum_i\sum_{j}c_{i}|j-j(i)|^\alpha K_m(i,j),
 \end{equation}
 for example $j(i)=\lceil ik_1/k\rceil$, turns the feasibility problem into a linear program. To avoid sparse solutions, we add another set of constraints
  \begin{equation}\label{eq:const2}
  \sum_j jK_m(i,j)\text{ is monotone non-decreasing in } i \text{ for }m=1,2.
  \end{equation}
That is, the rows of $K_m$ are increasing in the mean.

\begin{example}
\label{ex:sat}Consider the following model in which there is a never observed latent variable $\bz$
  \begin{equation*}
  \begin{split}
    \ba&\in\{0,1\}
    \\
    \bz\mid\ba=a &\sim N(\mu_za,\tau_z^2)
    \\
    \bs\mid\ba=a,\bz=z &\sim N(z+\mu_sa,\sigma^2_s)
    \\
    \by&\in \{0,1\}
    \\
    P(\by=1\mid \ba,\bs,\bz)&= p_z(\bz).
    \end{split}
  \end{equation*}
In words $\bz$ is the ability of the random subject, the two groups are of different ability. Let assume that $\mu_z,\mu_s\ge 0$ We have a noisy observations $\bs$ which is biased in favor of the stronger group. Since
\begin{equation*}
  \begin{split}
     \bz \mid \ba=a,\bs=s & \sim N(\frac{\sigma^2_s}{\sigma_s^2+\tau^2_z}a\mu_z + \frac{\tau^2_z}{\sigma_s^2+\tau^2_z}(s-a\mu_s), \frac{\sigma^2_s\tau^2_z}{\sigma_s^2+\tau^2_z})  ,
  \end{split} \end{equation*}
if we have $\mu_s=(\sigma^2_s/\tau_z^2)\mu_z$ then the Bayes estimator of $\bz$ given $\ba,\bs$ is the same for the two groups. In reality, biasing the score of the stronger group is not done explicitly. However, in order to improve the Bayes estimator, a culturally dependent criteria may be introduced, which in effect biases the score. This situation may seem fair. The score (e.g., the SAT) is used in the same way independently of the attribute and it is the optimal way to use the score as it is Bayes. Removal of the bias would result in an inferior selection procedure.

However, this is a discriminatory policy by other criteria.
In Figure \ref{fig:bayesprice} we present the situation. The score distributions of the two groups are different, but worse, between two subjects with the same latent ``ability'', $\bz$, the subject from the stronger group gets, on the average, a higher score $\bs$.

\begin{figure}
  \begin{subfigure}[b]{0.48\textwidth}
    \includegraphics[width=\textwidth]{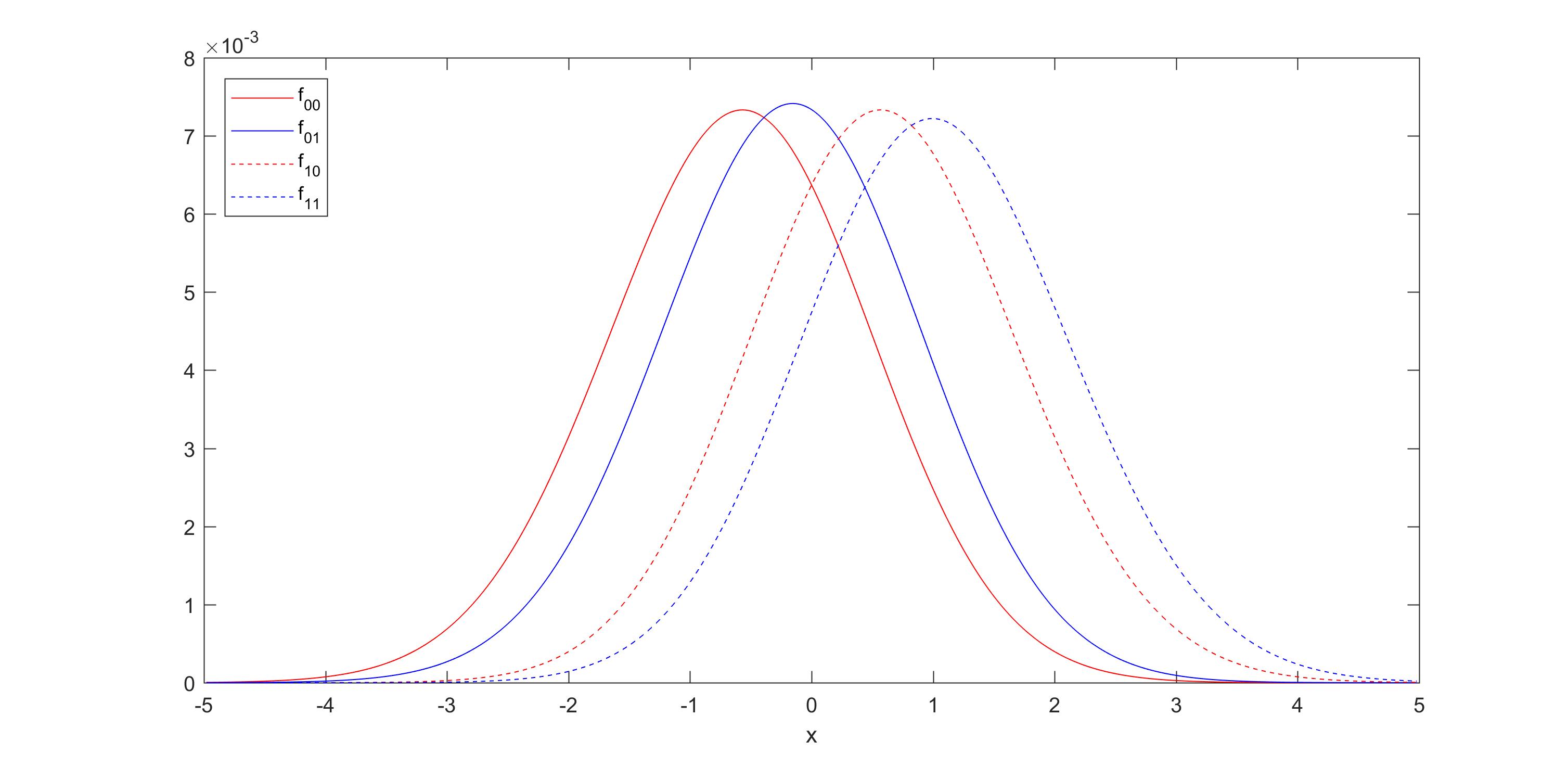}
    \caption{}
  \end{subfigure}
  ~ 
  \begin{subfigure}[b]{0.48\textwidth}
    \includegraphics[width=\textwidth]{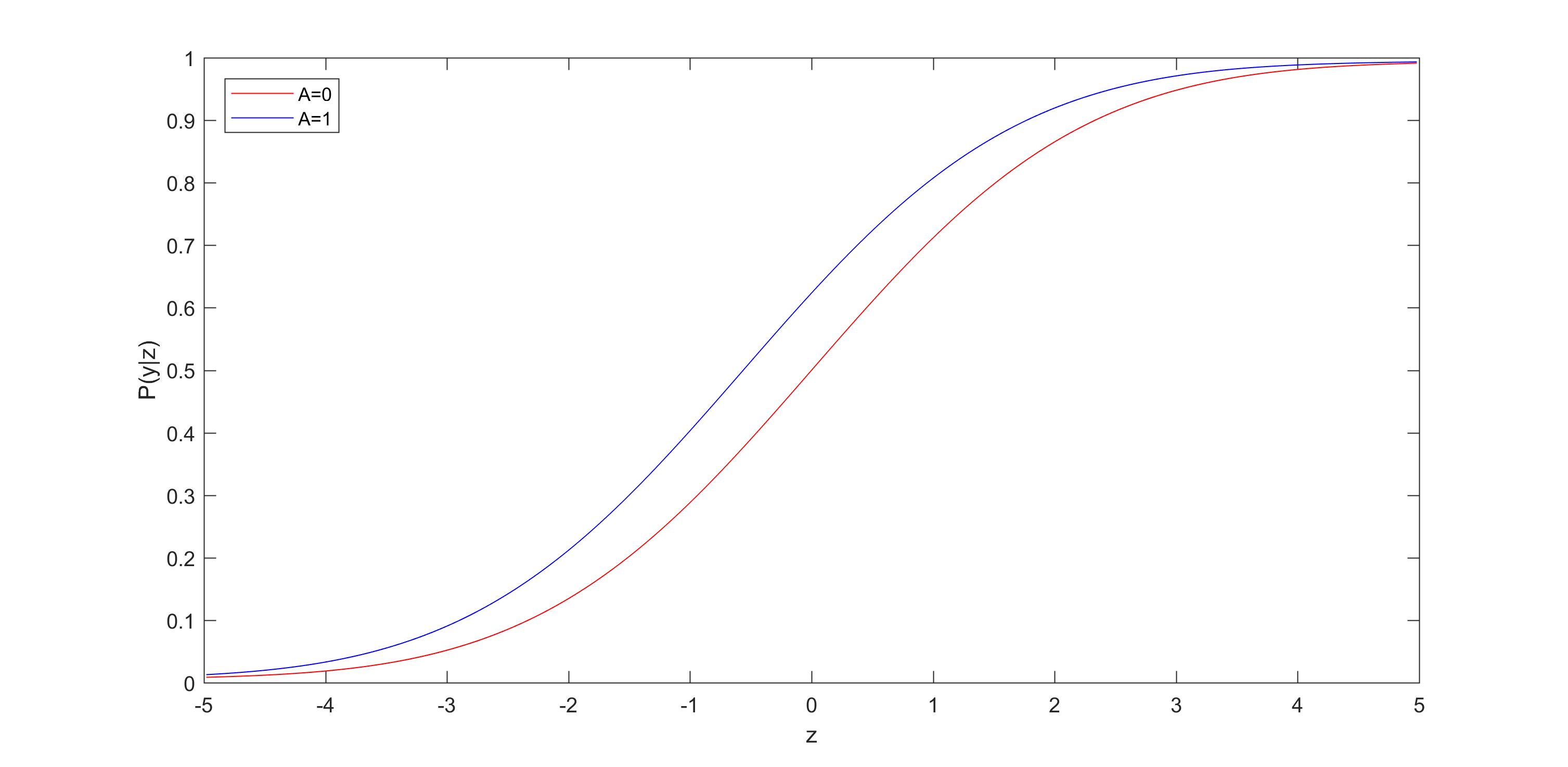}
    \caption{}
  \end{subfigure}
  \caption{(a) The conditional densities given $\by$ and $\ba$. (b) $P(\hat\by=1|\bz)$ vs. $\bz$. Among two candidates with the same latent ability, the one who comes from the weaker group gets on the average  a smaller Bayes decision.}
  \label{fig:bayesprice}
\end{figure}

The minimizer of \eqref{eq:target} subject to the \eqref{eq:const} and \eqref{eq:const2} constraints is presented in Figure \ref{fig:markov}.
\begin{figure}
  \includegraphics[width=0.75\textwidth]{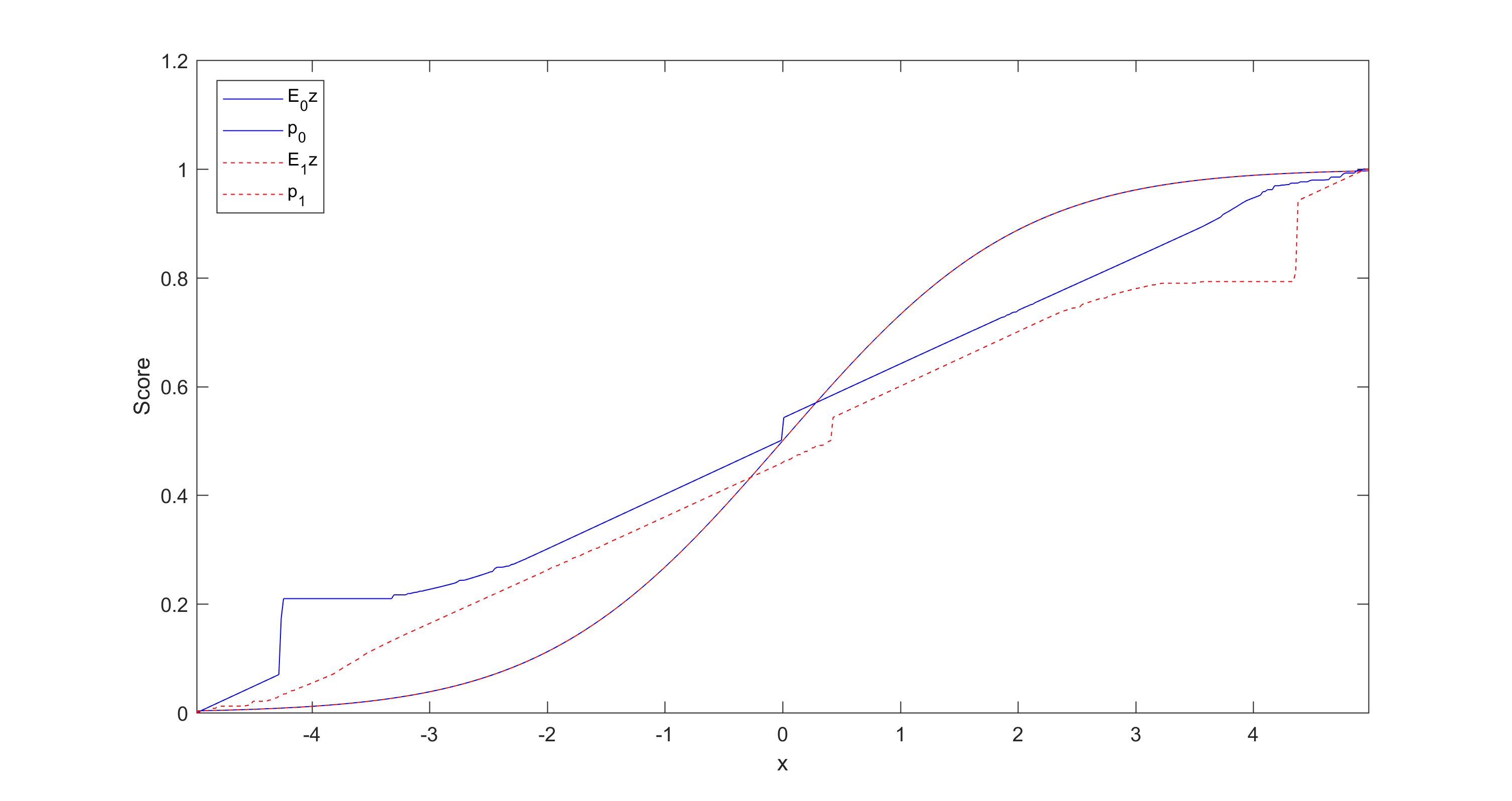}
  \caption{The Bayes estimator and the solution of the linear program non-discrimantory algorithm. }
  \label{fig:markov}
\end{figure}
The Breir score is presented in Table \ref{tab:breir}.
\begin{table}
  \centering
  \caption{The (minimal) impact of the type of decision on the Breir score}\label{tab:breir}
  \vspace{6pt}
  \begin{tabular}{|l|r|r|}
    \hline
    Decision & $\ba=0$ & $\ba=1$ \\\hline
    Bayes decision & 0.1898 & 0.1820 \\
    Bayes decision based on 2 categories & 0.2064 & 0.1990 \\
    The non-discrimantory decision & 0.2092 & 0.2001 \\
    \hline
  \end{tabular}
\end{table}

Finally, in Figure \ref{fig:contMarkov} we present the output distribution of the Markov kernel when $k_1=500$.

\begin{figure}
  \begin{subfigure}[b]{0.48\textwidth}
    \includegraphics[width=\textwidth]{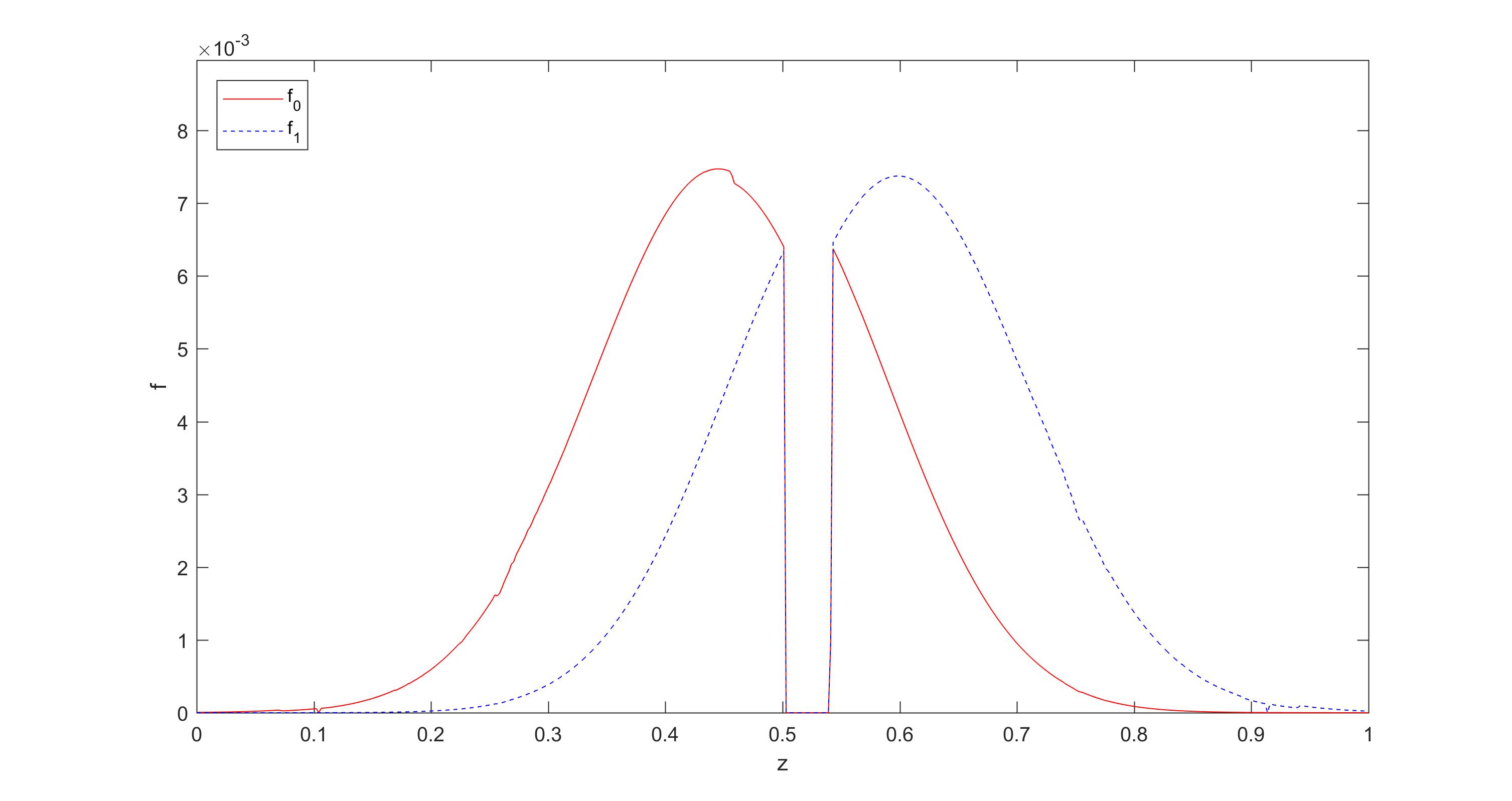}
    \caption{}
  \end{subfigure}
  ~ 
  \begin{subfigure}[b]{0.48\textwidth}
    \includegraphics[width=\textwidth]{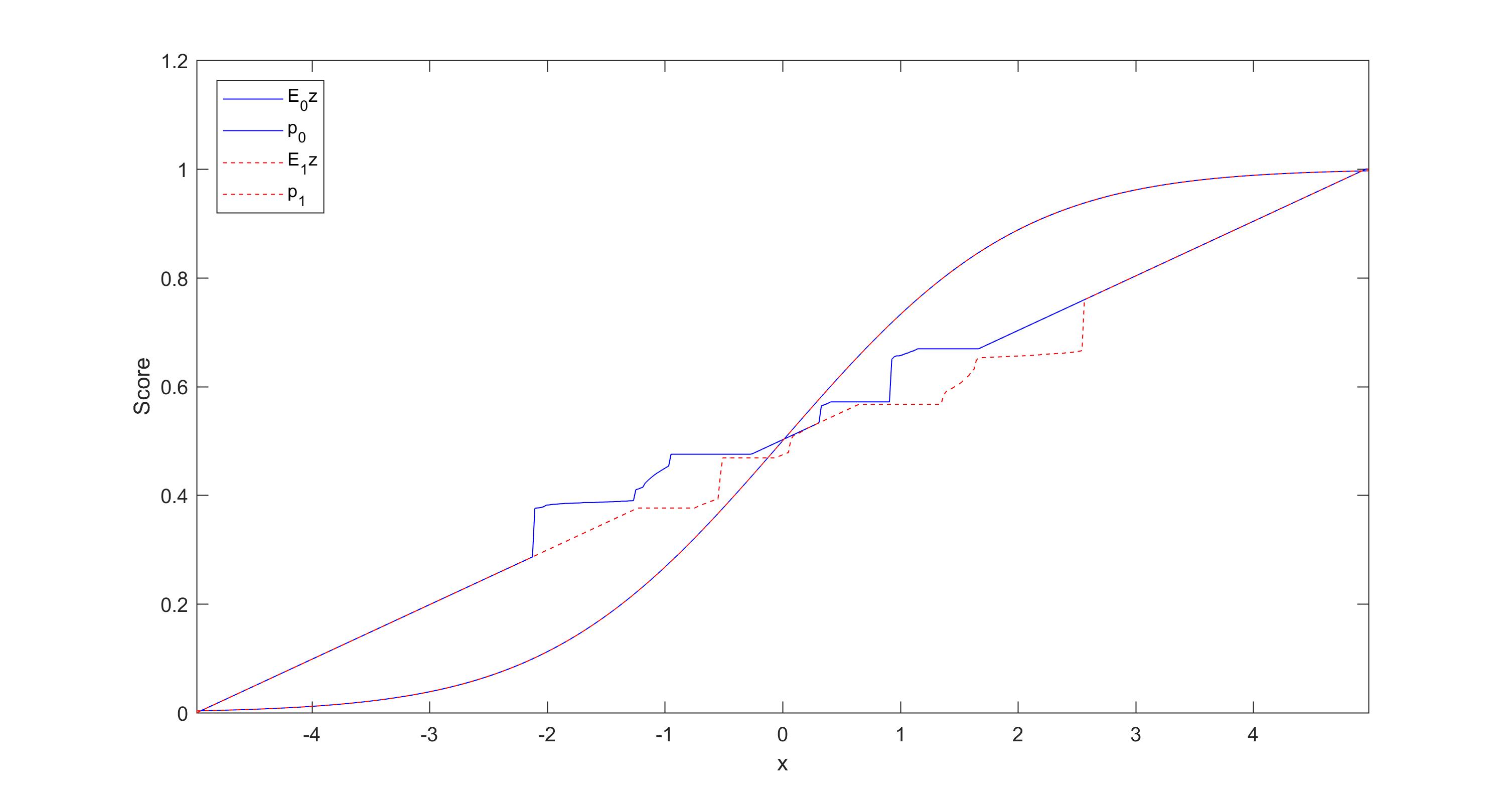}
    \caption{}
  \end{subfigure}
  \caption{The output  of the linear program. (a) The conditional density given $\by$; (b) The expected value of the output given $\bs$ conditional on $\ba$ and the Bayes estimator, which is independent of $\ba$. }
  \label{fig:contMarkov}
\end{figure}
\end{example}

The situation is more complicated when the outcome is more than binary valued, or continuous. However, it is simple enough if the outcome and raw prediction are jointly normal conditioned on the protected attribute. Consider, for example, the situation of Example \ref{ex:sat}. The concluding model is that $\bs\mid \by,\ba \sim \cN(\mu_\ba+A_\ba \by, \Sigma_\ba)$.
\begin{lemma}
  Suppose $\bs\in R^{n_s}$,  $\by\in R^{n_y}$ and $A_a\in   R^{n_s\times n_y}$, $a\in\{0,1\}$, are full rank. Then, without loss of generality, we can assume $\mu_a=0$, $a=\{0,1\}$, and $A_0=A_1$. A minimal randomized procedure that equates the conditional distribution of $\bs$ is given by $\hby|\ba,\bs = \cN(\bs, T_a)$, where $T_a=\sum_{(a/2-1)\lambda_i<0}\lambda_i\xi_i\xi_i'$, and $(\lambda_1,\xi_1),\dots,(\lambda_{n_s},\xi_{n_s})$ is an orthonormal eigen-system of $\Sigma_1-\Sigma_0$.
\end{lemma}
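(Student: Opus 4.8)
The plan is to prove the lemma in three moves: normalize the model by an invertible, group-dependent change of variables; write down an explicit Gaussian post-processing that equalizes the conditional laws; and then check that this post-processing injects the least possible amount of noise. First I would establish the reduction. When $n_s=n_y$ the matrices $A_a$ are invertible and, for each fixed value $\ba=a$, the map $\bs\mapsto\tilde\bs:=A_a^{-1}(\bs-\mu_a)$ is a bijection; hence a randomized rule built from $\tilde\bs$ pulls back to one built from $\bs$, and since the event $\{\by=y\}$ is unchanged, the requirement that $\cL(\hby\mid\ba=a,\by=y)$ not depend on $a$ is preserved. In the new coordinates $\tilde\bs\mid\ba=a,\by=y\sim\cN(y,A_a^{-1}\Sigma_a(A_a^{-1})^T)$, so we may assume $\mu_a=0$ and $A_0=A_1=A$ from the outset; the general full-column-rank case ($n_s\ge n_y$) is identical, with $L_a$ any invertible extension of a left inverse of $A_a$. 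In what follows $\Sigma_0,\Sigma_1$ denote the post-normalization covariances.

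Next I would exhibit the randomization. Let $(\lambda_i,\xi_i)$ be an orthonormal eigensystem of $\Sigma_1-\Sigma_0$, and let $M_+:=\sum_{\lambda_i>0}\lambda_i\xi_i\xi_i'$ and $M_-:=-\sum_{\lambda_i<0}\lambda_i\xi_i\xi_i'$ be its positive and negative parts, so $\Sigma_1-\Sigma_0=M_+-M_-$ with $M_+,M_-\succeq0$. Then $T_a$ as in the statement is the positive part of $\Sigma_{1-a}-\Sigma_a$, i.e. $T_0=M_+$ and $T_1=M_-$, so both are genuine covariance matrices. The one algebraic fact needed is the identity $\Sigma_0+M_+=\Sigma_1+M_-=:\Sigma_\ast$, immediate from $\Sigma_1=\Sigma_0+M_+-M_-$. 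Taking $\hby\mid\bs=s,\ba=a\sim\cN(s,T_a)$ with the added noise independent of $(\bs,\ba,\by)$, a Gaussian computation gives $\hby\mid\ba=a,\by=y\sim\cN(Ay,\Sigma_a+T_a)=\cN(Ay,\Sigma_\ast)$, which is free of $a$; this is exactly equalized odds.

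Finally I would verify minimality. For any randomization with $\hby\mid\bs,\ba=a\sim\cN(\bs,T_a)$ for some fixed $T_a\succeq0$, the tower rule and the law of total covariance give $\hby\mid\ba=a,\by=y\sim\cN(Ay,\Sigma_a+T_a)$, so equalizing over $a$ forces $\Sigma_0+T_0=\Sigma_1+T_1=:\Sigma_\ast$ with $\Sigma_\ast\succeq\Sigma_0$ and $\Sigma_\ast\succeq\Sigma_1$; the total injected variance is $\tr T_0+\tr T_1=2\tr\Sigma_\ast-\tr\Sigma_0-\tr\Sigma_1$, so one minimizes $\tr\Sigma_\ast$ over matrices dominating both $\Sigma_0$ and $\Sigma_1$. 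Writing $\tr\Sigma_\ast=\sum_i\xi_i'\Sigma_\ast\xi_i$ and bounding each term by $\xi_i'\Sigma_0\xi_i$ (from $\Sigma_\ast\succeq\Sigma_0$) and by $\xi_i'\Sigma_0\xi_i+\lambda_i$ (from $\Sigma_\ast\succeq\Sigma_1$ together with $(\Sigma_1-\Sigma_0)\xi_i=\lambda_i\xi_i$) yields $\tr\Sigma_\ast\ge\tr\Sigma_0+\sum_{\lambda_i>0}\lambda_i=\tr(\Sigma_0+M_+)$, with equality precisely at $\Sigma_\ast=\Sigma_0+M_+$, which unwinds to the procedure above. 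I expect this last step to be the main obstacle: one has to fix the precise notion of ``minimal'' — least total added variance among randomizations whose conditional mean given $(\bs,\ba)$ is $\bs$ and whose noise covariance is $\bs$-free — and verify that the Jordan (positive-part) splitting is the unique trace-minimal completion of $\Sigma_\ast\succeq\Sigma_0,\Sigma_1$; excluding more exotic, possibly non-Gaussian post-processings with strictly smaller ``noise'' would need an additional sufficiency/degradation argument, which I would either carry out or absorb into the definition of the admissible class. A lesser nuisance is the change of variables when $n_s>n_y$, where one must check the normalizing map can be taken invertible.
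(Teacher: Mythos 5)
Your construction coincides with the paper's: normalize by a group-dependent affine map so that $\mu_a=0$ and $A_0=A_1$, then add independent Gaussian noise whose covariance is the positive (resp.\ negative) part of $\Sigma_1-\Sigma_0$, the whole point being the identity $\Sigma_0+M_+=\Sigma_1+M_-$, which the paper records as $T_0-T_1=\Sigma_1-\Sigma_0$; you also read the garbled index set in the statement the right way ($T_0=M_+$, $T_1=M_-$). Where you genuinely diverge is minimality. The paper disposes of it in one line by appealing to completeness of the multivariate normal shift family ($\Ex\bigl[K_1(\bs,\cdot)-K_0(\bs,\cdot)\mid\by\bigr]=0$ iff $K_1=K_0$), which in principle speaks to arbitrary Markov-kernel randomizations but is left essentially unargued. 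You instead restrict to additive Gaussian noise $\cN(\bs,T_a)$ with $T_a\succeq 0$ free of $\bs$, note that equalized odds forces $\Sigma_0+T_0=\Sigma_1+T_1=\Sigma_\ast$ with $\Sigma_\ast$ dominating both covariances, and minimize $\tr\Sigma_\ast$ by testing against the eigenvectors $\xi_i$, getting $\tr\Sigma_\ast\ge\tr(\Sigma_0+M_+)$; for the uniqueness of the minimizer you should add that for a positive semidefinite $S$, $\xi'S\xi=0$ implies $S\xi=0$, which upgrades the termwise equalities to $\Sigma_\ast=\Sigma_0+M_+$. Your route is more explicit and actually checkable, at the price of a narrower admissible class; the paper's route aims wider but would itself need something like your argument (plus a degradation/sufficiency step) to be rigorous. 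The caveats you flag --- pinning down ``minimal'', excluding non-Gaussian post-processings, and taking an invertible extension of $A_0^\dagger$ when $n_s>n_y$ (the paper's map $\bs\mapsto A_1A_0^\dagger(\bs-\mu_0)$ is not invertible there) --- are gaps in the paper's own proof, not defects introduced by yours.
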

\begin{proof}
  We can always transform $\bs$ when $\ba=0$ by the transformation
  \begin{equation*}
    \bs\to \begin{cases}
      A_1A_0^\dagger(\bs-\mu_0), &\mbox{if } a=0 \\
      \bs-\mu_1&\mbox{if } a=1,
    \end{cases}
    \end{equation*}
    which equate the conditional mean of $\bs$ under $\ba\in\{0,1\}$.

    Now, since $T_0-T_1=\Sigma_1-\Sigma_0$, ${\cal L}(\hby\mid\by,\ba=a)$ does not depend on $a$. Finally, since $A_1$ has a full rank, $E\bigl(K_1(\bs,\cdot) - K_0(\bs,\cdot)\mid \by)=0 $, iff $K_1-K_0=0$ (the shift is a complete sufficient statistics in the multivariate normal distribution. This is achieved by the randomization given above.
\end{proof}

\section{Testing conditional parity}

In this section, we describe a kernel-based approach to testing CP developed by \cite{Zhang2012}. We begin by characterizing the conditional independence condition $\bx\perp\ba\mid\bz$ in terms of cross-covariance operators. Let $k_x$, $k_a$, and $k_z$ be positive definite kernels on $\cX$, $\cA$, and $\cZ$ respectively and $\cH_x$, $\cH_a$, and $\cH_z$ be the respective reproducing kernel Hilbert spaces (RKHS). Throughout this section, we assume the kernels satisfy
\[
\Ex\bigl[k_x(\bx,\bx)\bigr] < \infty,\quad\Ex\bigl[k_a(\ba,\ba)\bigr] < \infty,\quad\Ex\bigl[k_z(\bz,\bz)\bigr] < \infty.
\]

For any probability distribution on $\cX$, its {\it RKHS embedding} is the unique $\mu_x\in\cH_x$ such that
\[
\langle\mu_x,f\rangle_x = \Ex\bigl[f(\bx)\bigr]\text{ for any }f\in\cH_x
\]
for any $f\in\cH_x$. It is well-defined because the assumptions on $k_x$ imply $f\to\Ex\bigl[f(\bx)\bigr]$ is a bounded linear functional. By the reproducing property, we see that $\mu_x$ has the explicit form
\[
\mu_x(x) = \langle k_x(x,\cdot),\mu_x\rangle_x = \Ex\bigl[k_x(\bx,x)\bigr].
\]

The {\it cross-covariance operator} of $(\bx,\ba)$ is an operator from $\cH_a$ to $\cH_x$ such that
\[
\langle f,\Sigma_{x,a} g\rangle_x = \cov\bigl[f(\bx),g(\ba)\bigr]\text{ for any }f\in\cH_x\text{ and }g\in\cH_a.
\]
It is the functional analogue of the covariance matrix a pair of random vectors. In terms of the kernel and the RKHS embeddings of the marginal distributions of $\bx$ and $\ba$, it has the form
\[
\langle f,\Sigma_{x,a}g\rangle_x = \Ex\bigl[\langle f,k_x(\bx,\cdot) - \mu_x\rangle_x\langle k_a(\ba,\cdot) - \mu_a,g\rangle_a\bigr].
\]
Letting $f = k(x,\cdot)$, we see that $\Sigma_{x,a}g$ is
\[
(\Sigma_{x,a}g)(x) = \langle k(x,\cdot),\Sigma_{x,a}g\rangle_x = \Ex\bigl[(k_x(\bx,x) - \mu_x(x))\bigl(g(\ba) - \Ex\bigl[g(\ba)\bigr]\bigr)\bigr].
\]
The cross-covariance operator of $(\bx,\bx)$ is a positive self-adjoint operator and is called the {\it covariance operator} of $\bx$. The {\it conditional cross-covariance operator} of $(\bx,\ba)$ given $\bz$ is
\[
\Sigma_{x,a\mid z} = \Sigma_{x,a} - \Sigma_{x,z}\Sigma_{z,z}^{-1}\Sigma_{z,a}.
\]
Under some technical conditions, \cite{Fukumizu2008Kernel} show that it is an operator from $\cH_a$ to $\cH_x$ such that
\[
\langle f,\Sigma_{x,a\mid z}g\rangle_x = \Ex\bigl[\cov\bigl[f(\bx),g(\ba)\mid\bz\bigr]\bigr].
\]

Before we state the functional characterization of conditional independence, we define some additional notation. The tensor product $\cH_x\otimes\cH_a$ is an RKHS equipped with the inner product
\[
\langle f\otimes g,x\otimes y\rangle = \langle f,x\rangle_x\langle g,y\rangle_a.
\]
We extend this inner product to all of $\cH_x\otimes\cH_a$ by bilinearity. We see that the representer of evaluation in $\cH_x\otimes\cH_a$ is the outer product of the representers of evaluation in $\cH_x$ and $\cH_a$:
\[
\langle k(x,\cdot)\otimes k(y,\cdot),f\otimes g\rangle = f(x)g(y),
\]
and the kernel is the pointwise product of $k_x$ and $k_y$:
\[
\begin{aligned}
(k_x\cdot k_y)((x_1,y_1),(x_2,y_2)) &= \langle k(x_1,\cdot)\otimes k(y_1,\cdot),k(x_2,\cdot)\otimes k(y_2,\cdot)\rangle \\
&= k_x(x_1,x_2)k_y(y_1,y_2).
\end{aligned}
\]
We are ready to state the functional characterization of conditional independence by \cite{Fukumizu2008Kernel}.

\begin{theorem}[\cite{Fukumizu2004,Fukumizu2008Kernel}]
\label{eq:rkhs-characterization-con-ind}
Let $k_{x,z} := k_x\cdot k_z$ be a kernel on $\cX\times\cZ$ and $\cH_x\otimes\cH_z$ be its RKHS. As long as $k_{x,z}\cdot k_a$ is a characteristic kernel \footnote{We call kernel is characteristic if the RKHS embedding $\bbP\to\Ex_{\bbP}\bigl[k(\bx,\cdot)\bigr]$ is injective. In other words, $\Ex_{\bbP}\bigl[f(\bx)\bigr] = \Ex_{\bbQ}\bigl[f(\bx)\bigr]$ for all $f\in\cH$ implies $\bbP=\bbQ$.} on $\cX\times\cZ\times\cA$ and $\cH_z\oplus\reals$ is dense in $L^2(\bbP_z)$, where $\oplus$ denotes direct sum and $\reals$ is the space of constant functions, we have
\[
\Sigma_{(x,z),a\mid z} = 0\iff\bx\perp\ba\mid\bz.
\]
\end{theorem}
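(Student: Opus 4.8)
The plan is to establish the two implications separately. The forward implication $\bx\perp\ba\mid\bz\Rightarrow\Sigma_{(x,z),a\mid z}=0$ is soft and uses neither the denseness hypothesis nor the characteristic property; the converse is where the hypotheses do the work, and its crux is a ``de-averaging'' argument.

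For the forward direction, I would first note that $\bx\perp\ba\mid\bz$ implies $(\bx,\bz)\perp\ba\mid\bz$, since conditioning on $\bz$ already trivializes the $\sigma(\bz)$-component of $\sigma(\bx,\bz)=\sigma(\bx)\vee\sigma(\bz)$. Hence for every $f\in\cH_x\otimes\cH_z$ and $g\in\cH_a$ the random variables $f(\bx,\bz)$ and $g(\ba)$ are conditionally independent given $\bz$, so $\cov\bigl[f(\bx,\bz),g(\ba)\mid\bz\bigr]=0$ almost surely and $\langle f,\Sigma_{(x,z),a\mid z}g\rangle=\Ex\bigl[\cov[f(\bx,\bz),g(\ba)\mid\bz]\bigr]=0$. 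Because products $f=u\otimes v$ span a dense subspace of $\cH_x\otimes\cH_z$, this gives $\Sigma_{(x,z),a\mid z}=0$.

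For the converse, suppose $\Sigma_{(x,z),a\mid z}=0$. First I would invoke the identity of \cite{Fukumizu2004,Fukumizu2008Kernel} --- the technical conditions alluded to above --- to restate the operator equation as $\Ex\bigl[\cov[f(\bx,\bz),g(\ba)\mid\bz]\bigr]=0$ for all $f\in\cH_x\otimes\cH_z$, $g\in\cH_a$. Now I would exploit the augmentation of $\bx$ by $\bz$: with $f=u\otimes v$ the factor $v(\bz)$ is $\bz$-measurable and pulls out of the conditional covariance, so $\Ex\bigl[v(\bz)\,\cov[u(\bx),g(\ba)\mid\bz]\bigr]=0$ for all $u\in\cH_x$, $v\in\cH_z$, $g\in\cH_a$. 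Writing $\psi_{u,g}(z):=\cov[u(\bx),g(\ba)\mid\bz=z]$ --- which the moment assumptions on the kernels place in $L^2(\bbP_z)$ --- this says $\psi_{u,g}$ is $L^2(\bbP_z)$-orthogonal to every element of $\cH_z$; together with orthogonality to the constants, the hypothesis that $\cH_z\oplus\reals$ is dense in $L^2(\bbP_z)$ forces $\psi_{u,g}=0$ $\bbP_z$-a.s. Thus for $\bbP_z$-a.e.\ $z$ and all $u\in\cH_x$, $g\in\cH_a$ we have $\cov[u(\bx),g(\ba)\mid\bz=z]=0$, which means the RKHS embedding of $\cL(\bx,\ba\mid\bz=z)$ into $\cH_x\otimes\cH_a$ equals the tensor product of the embeddings of the conditional marginals; since $k_{x,z}\cdot k_a$ is characteristic (in particular $k_x\cdot k_a$ separates distributions on $\cX\times\cA$), this gives $\bx\perp\ba\mid\{\bz=z\}$ for $\bbP_z$-a.e.\ $z$, i.e.\ $\bx\perp\ba\mid\bz$.

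The main obstacle is the de-averaging step: passing from the single integrated identity to the pointwise-in-$z$ statement $\cov[u(\bx),g(\ba)\mid\bz=z]=0$ a.s. The augmentation is exactly what makes this possible, since it lets us reweight the integrand by an arbitrary $v\in\cH_z$; and the appearance of the constants in the hypothesis $\cH_z\oplus\reals$ (rather than just $\cH_z$) is the reminder that one must also control the component of $\psi_{u,g}$ along the constant function --- equivalently, verify $\Ex\bigl[\cov[u(\bx),g(\ba)\mid\bz]\bigr]=0$ --- which is not delivered by the product-form test functions alone and needs a short separate argument. A secondary, purely technical, matter is to track integrability of the conditional-covariance functions so that the $L^2(\bbP_z)$ manipulations and the Fukumizu quadratic-form identity are legitimate; the standing moment conditions on $k_x,k_a,k_z$ are there precisely for this.
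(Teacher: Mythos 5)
Your proposal follows essentially the same route as the paper's own (sketch of a) proof: the paper also treats the forward direction as soft, and for the converse pulls the $\cH_z$ factor out of $\Ex\bigl[\cov[f(\bx)h(\bz),g(\ba)\mid\bz]\bigr]$, invokes denseness of $\cH_z\oplus\reals$ in $L^2(\bbP_z)$ to de-average, and then uses the characteristic-kernel property --- exactly your argument. Your additional remarks on the constant component and on integrability are refinements of points the paper glosses over, not a different approach.
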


The non-trivial implication in Theorem \ref{eq:rkhs-characterization-con-ind} is the ``only if'' implication. If $\Sigma_{(x,z),a\mid z} = 0$, we have
\[
0 = \Ex\bigl[\cov\bigl[f(\bx)h(\bz),g(\ba)\mid\bz\bigr]\bigr] = \Ex\bigl[\cov\bigl[f(\bx),g(\ba)\mid\bz\bigr]h(\bz)\bigr]
\]
for any $h\in\cH_z$, which implies $\cov\bigl[f(\bx),g(\ba)\mid\bz\bigr] = 0$ as long as $\cH_z$ is rich enough. The assumption $\cH_z + \reals$ is dense in $L^2(\bbP_z)$ ensures $\cH_z$ is rich enough.

In light of Theorem \ref{eq:rkhs-characterization-con-ind}, a natural test statistic is the Hilbert-Schmidt (HS) norm of a plug in estimator of the conditional cross-covariance operator. Let the {\it empirical cross-covariance operator} of $(\bx,\ba)$ be
\[
\hSigma_{x,a} := \frac1n\sum_{i=1}^nk_x(\bx_i,\cdot)\otimes k_a(\ba_i,\cdot) -\hmu_x\otimes\hmu_a,
\]
where $\hmu_x := \frac1n\sum_{i=1}^nk_x(\bx_i,\cdot)$ (resp.\ $\hmu_a$). The {\it empirical conditional cross-covari\-ance operator} is
\[
\hSigma_{(x,z),a\mid z} := \hSigma_{(x,z),a} - \hSigma_{(x,z),z}(\hSigma_{z,z} + \lambda I_z)^{-1}\hSigma_{z,a},
\]
where $\lambda > 0$ is a regularization parameter. It is possible to express its HS norm in terms of the kernel matrices $\bG_x$, $\bG_a$, $\bG_z$, where $\bigl[\bG_x\bigr]_{i,j} = k_x(\bx_i,\bx_j)$ (resp.\ $\bG_a$, $\bG_z$).

\begin{lemma}
\label{lem:hsic-test-stats}
We have
\[
\begin{aligned}
\|\hSigma_{(x,z),a\mid z}\|_{\HS}^2 &= \frac{1}{n^2}\tr(\bK_{x,z}\bK_a) - \frac{2}{n^2}\tr(\bK_{x,z}\bK_z(\bK_z + \lambda M_n)^\dagger\bK_y) \\
&\quad+ \frac{1}{n^2}\tr((\bK_z + \lambda M_n)^\dagger\bK_z\bK_{x,z}\bK_z(\bK_z + \lambda M_n)^\dagger\bK_y),
\end{aligned}
\]
where $\bK_{x,z} = \bK_x\cdot\bK_z$.
\end{lemma}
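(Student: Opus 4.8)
The plan is to represent every empirical operator through a finite‑rank ``feature operator'', so that the Hilbert--Schmidt norm collapses to matrix traces; the only genuinely delicate point is turning the regularized \emph{operator} inverse $(\hSigma_{z,z}+\lambda I)^{-1}$ into the matrix pseudo‑inverse $(\bK_z+\lambda M_n)^\dagger$.

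First I would introduce the centered feature vectors $\varphi_i := k_{x,z}((\bx_i,\bz_i),\cdot)-\hmu_{x,z}\in\cH_x\otimes\cH_z$, $\psi_i := k_a(\ba_i,\cdot)-\hmu_a\in\cH_a$, $\zeta_i := k_z(\bz_i,\cdot)-\hmu_z\in\cH_z$, and the feature operators $\Psi_{x,z},\Psi_a,\Psi_z$ defined by $\Psi_z e_i=\zeta_i$ and so on. A short computation gives the bookkeeping identities $\Psi_{x,z}^\top\Psi_{x,z}=\bK_{x,z}$, $\Psi_a^\top\Psi_a=\bK_a$, $\Psi_z^\top\Psi_z=\bK_z$ (the centered Gram matrices, $\bK_z=M_n\bG_z M_n$, and $\bK_{x,z}$ the centered Gram of the product kernel $k_xk_z$), together with $\hSigma_{(x,z),a}=\tfrac1n\Psi_{x,z}\Psi_a^\top$, $\hSigma_{(x,z),z}=\tfrac1n\Psi_{x,z}\Psi_z^\top$, $\hSigma_{z,a}=\tfrac1n\Psi_z\Psi_a^\top$, $\hSigma_{z,z}=\tfrac1n\Psi_z\Psi_z^\top$, and $\ran\Psi_z^\top\subseteq\ones^\perp=\ran M_n$ (the last since $\sum_i\zeta_i=0$). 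Plugging these into $\hSigma_{(x,z),a\mid z}=\hSigma_{(x,z),a}-\hSigma_{(x,z),z}(\hSigma_{z,z}+\lambda I)^{-1}\hSigma_{z,a}$ and applying the push‑through identity $\Psi_z^\top(\tfrac1n\Psi_z\Psi_z^\top+\lambda I)^{-1}=(\tfrac1n\Psi_z^\top\Psi_z+\lambda I_n)^{-1}\Psi_z^\top$ (valid because $\hSigma_{z,z}$ is positive semidefinite, so $\tfrac1n\Psi_z\Psi_z^\top+\lambda I$ is boundedly invertible for $\lambda>0$), then absorbing the $\tfrac1n$ factors into $\lambda$ and using $\ran\Psi_z^\top\subseteq\ones^\perp$ to replace $(\bK_z+\lambda I_n)^{-1}$ by $(\bK_z+\lambda M_n)^\dagger$ on that subspace, one obtains $\hSigma_{(x,z),a\mid z}=\tfrac1n\Psi_{x,z}(I_n-B)\Psi_a^\top$ with $B:=\bK_z(\bK_z+\lambda M_n)^\dagger=(\bK_z+\lambda M_n)^\dagger\bK_z$, the two forms agreeing because $\bK_z M_n=M_n\bK_z=\bK_z$, so $\bK_z$ commutes with $\bK_z+\lambda M_n$; in particular $B=B^\top$. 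Finally, for any $C\in\reals^{n\times n}$ the operator $\Psi_{x,z}C\Psi_a^\top$ has $\|\Psi_{x,z}C\Psi_a^\top\|_{\HS}^2=\tr(C^\top\Psi_{x,z}^\top\Psi_{x,z}C\,\Psi_a^\top\Psi_a)=\tr(C^\top\bK_{x,z}C\bK_a)$; taking $C=\tfrac1n(I_n-B)$, expanding, and using cyclicity together with symmetry of all matrices involved (which forces the two cross terms to be equal) yields precisely the three displayed terms. (The $\bK_y$ in the statement should read $\bK_a$.)

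The Gram‑matrix identifications and the trace expansion are routine. The step where I would concentrate effort is the inversion lemma: one must justify the push‑through identity at the level of operators on the infinite‑dimensional $\cH_z$, and then argue carefully that since $\Psi_z^\top$ maps into $\ones^\perp$ and $(\bK_z+\lambda I_n)^{-1}$ maps $\ones^\perp$ into $\ones^\perp$ --- if $(\bK_z+\lambda I_n)w=v\in\ones^\perp$ then $\lambda\,\ones^\top w=\ones^\top v-\ones^\top\bK_z w=0$ because $\bK_z$ has range in $\ones^\perp$ --- replacing that inverse by the $M_n$‑regularized pseudo‑inverse does not change the expression. This is exactly why $M_n$, and not $I_n$, appears inside the $\dagger$.
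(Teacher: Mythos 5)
Your proof is correct, and while it rests on the same basic device as the paper's --- identifying each empirical operator with a finite-rank feature operator so that Hilbert--Schmidt inner products collapse to traces of centered Gram matrices --- it is organized differently and is in one respect more complete. The paper first expands $\|\hSigma_{x,a\mid z}\|_{\HS}^2$ into three HS inner products, evaluates only the first one explicitly (by choosing an orthonormal basis $u_i = \bA M_n\alpha_i$ of the finite-dimensional image subspace and invoking the reproducing property), and dispatches the cross term and the last term with ``by similar calculations''; in particular it never explains where the matrix pseudo-inverse $(\bK_z+\lambda M_n)^\dagger$ comes from. You instead derive a closed form $\hSigma_{(x,z),a\mid z}=\tfrac1n\Psi_{x,z}(I_n-B)\Psi_a^\top$ for the whole operator via the push-through identity together with the observation that $\ran\Psi_z^\top\subseteq\ones^\perp$, and then take a single HS norm $\tr\bigl(C^\top\bK_{x,z}C\bK_a\bigr)$. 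That is precisely the step the paper glosses over, and your justification for replacing $(\bK_z+\lambda I_n)^{-1}$ by $(\bK_z+\lambda M_n)^\dagger$ --- both agree on $\ones^\perp$, which contains the range of $\Psi_z^\top$ and is invariant because $\bK_z=M_n\bG_zM_n$ --- is the right one; so is the symmetry argument that collapses the two cross terms into the factor $2$. Two minor points: the push-through actually yields $(\bK_z+n\lambda I_n)^{-1}$, so the $\lambda$ in the displayed formula is the rescaled regularization parameter (you flag this by ``absorbing the $\tfrac1n$ into $\lambda$,'' consistent with the paper's implicit convention), and you are right that $\bK_y$ in the statement is a typo for $\bK_a$.
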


\cite{Zhang2012} show that the test statistic is asymptotically a mixture of independent $\chi_1^2$ random variables
\[\textstyle
n\|\hSigma_{(x,z),a\mid z}\|_{\HS}^2 \overset{d}{\to} \sum_{i=1}^\infty\lambda_i\bz_i^2,\quad,\bz_i\overset{\iid}{\sim}\cN(0,1),
\]
and proposed two ways to approximate the asymptotic distribution. We refer to their paper for the details.

\section{Do minority neighborhoods pay higher insurance premiums?}
\label{sec:insurance}

It has been observed that drivers from predominantly minority zip codes are often charged higher insurance premiums than drivers from non-minority zip codes \citep{Feltner2015High}. The insurance industry has justified the higher premiums by arguing that drivers from minority neighborhoods have higher risk of accidents. In this section, we examine the claim of the insurance industry using the proposed framework and the data obtained by \cite{Larson2017how}.

Before presenting the results, we briefly describe the data, which was obtained by \cite{Larson2017how} from Quadrant Information Services and S\&P Global Inc. It consists of 98,441 insurance quotes for drivers fitting a single profile: A 30-year-old female teacher with a bachelor's degree, excellent credit, no accidents or moving violations, and who is purchasing a policy for \$100,000 of property damage coverage and \$100,000 to cover medical bills per person up to \$300,000 per accident for the first time. She drives a 2016 Toyota Camry, has a 15 mile daily commute, and drives 13,000 miles a year. The quotes are associated with the zip code of the driver, and by fixing the profile and letting zip code change, we control for factors outside of geography.

The risk of drivers in a zip code is measured by the ratio of dollars paid out for liability claims to the number of insured cars. In California, this ratio is called {\bf average loss} and is a measure of the cost to the insurer of insuring a car in a zip code. Ideally, we would have data on the claims from drivers that fit aforementioned profile, but, unfortunately, we do not have such fine-grained data. We refer to \cite{Larson2017how} for further details regarding the data.

We tested two hypotheses in California: the quotes were independent of the percent minority population given the risk in the associated zip codes ($H_1$); the quotes were independent of whether the associated zip code is underserved given the risk ($H_2$). The California Department of Insurance defines ``underserved'' zip codes as zip codes where (i) the fraction of uninsured drivers exceeds the statewide average by at least 10\%, (ii) the per capita income is below the statewide median, (iii) minorities are at least two-thirds of the population. Among the 1,648 Californian zip codes recorded in the data, there are 145 such underserved zip codes. The results are reported in Table \ref{tab:california}.

\begin{table}
\caption{Table of $p$-values for 9 major insurance companies in California}
\label{tab:california}
\begin{tabular}{lrr} \toprule
Company  &  $H_1$ $p$-value & $H_2$ $p$-value \\ \midrule
Allstate & $< 0.001$ & $< 0.001$ \\
Berkshire Hathaway &    0.488 &    0.456 \\
Farmers & $< 0.001$ & $< 0.001$ \\
Liberty Mutual & $< 0.001$ & $< 0.001$ \\
Mercury &    0.925 &    0.648 \\
Nationwide & $< 0.001$ & $< 0.001$ \\
Progressive &  $< 0.001$ &    0.365 \\
State Farm & $< 0.001$ & $< 0.001$ \\
USAA & $< 0.001$ & $< 0.001$ \\
\bottomrule
\end{tabular}
\end{table}

We examine the data on Progressive Group more closely because there is a discrepancy between the results of the test of $H_1$ (its quotes are independent of the percent minority population given the risk) and that of $H_2$ (its quotes are independent of whether the associated zip code is underserved given the risk). To comprehend this discrepancy, we redefine underserved zip codes as zip codes where the percent minorities population is at least $q$ for various values of $q$ and test $H_2$ again. The results are reported in Table \ref{tab:progressive}. We see that although the percentage of minority population, as a continuous variable, does not pass the conditional independence test at $0.05$ level, the minority indicator derived from it sometimes does.

Mathematically, the discrepancy between the tests is unsurprising: $\bx \not\perp \ba$ does not generally imply $f(\bx)\not\perp \by$. However, its practical implication is noteworthy because it exposes one problem of thresholding a continuous protected attribute. Thresholding tolerates discrimination within subgroups (discrimination within the minority/non-minority subgroup), as long as there is no discrimination across different subgroups. We also note that the $p$-value when $q=0.6$ in \ref{tab:progressive} is quite different from that for $H_1$ in \ref{tab:california}. This is because only $33\%$ of the zip codes where minority percentage exceeds $60\%$ are truly underserved.

\begin{table}
\caption{Table of $p$-values for Progressive Group}
\label{tab:progressive}
\begin{tabular}{lr} \toprule
Threshold (q) &  $p$-value \\ \midrule
0.1 & 0.297 \\
0.2 & 0.778 \\
0.3 & 0.698 \\
0.4 & 0.002 \\
0.5 & $< 0.001$ \\
0.6 & 0.010 \\
0.7 & 0.110 \\
0.8 & 0.061 \\
0.9 & 0.009 \\
\bottomrule
\end{tabular}
\end{table}

In Illinois and Texas, we tested the hypothesis that the quotes were independent of the percent minority population given the risk in the associated zip codes. In Illinois, we excluded the zip codes in Chicago because Chicago has a law that require insurers to charge the same price for bodily injury insurance. The results are reported in Table \ref{tab:il-tx}.

\begin{table}
\caption{Table of $p$-values for 9 major insurance companies in Illinois and Texas}
\label{tab:il-tx}
\begin{tabular}{lrr} \toprule
Insurance group  &  Illinois & Texas \\ \midrule
Allstate & $< 0.001$ & $< 0.001$ \\
American Family & $< 0.001$ & \\
Auto Owners & $< 0.001$ & \\
Berkshire Hathaway & $< 0.001$ & $< 0.001$ \\
Country Financial & $< 0.001$ & \\
Erie & $< 0.001$ & \\
Farmers & $< 0.001$ & $< 0.001$ \\
Hartford Fire & $< 0.001$ & \\
Liberty Mutual & $< 0.001$ & $< 0.001$ \\
Metropolitan & $< 0.001$ & \\
Nationwide & & $< 0.001$ \\
Pekin & $< 0.001$ & \\
Progressive & $< 0.001$ & $< 0.001$ \\
State Farm & $< 0.001$ & $< 0.001$ \\
Traverlers & $< 0.001$ & \\
USAA & $< 0.001$ & $< 0.001$ \\
\bottomrule
\end{tabular}
\end{table}

Finally, 
we apply the randomization procedure described in Section \ref{sec:randomization} to adjust the premium. We concentrate on one insurer (Garrison Property and Casualty Insurance Company) and only on the property damage policy premium. There are two protected group. The white-non-Hispanics and the rest. However, this attribute is not given (to the insurer and in the data) and is derived from the proportions of the the two groups within any zip-code area.

Let $\bx$ be the premium in the zip code, $\bz$ the state defined risk and $\bq$ the proportion of whites (non-Hispanic) within the zip code area.  Linear regression of $\bx$ on $\bz$, $\bq$, and $\bq\cdot\bz$ finds all coefficients to be significant at the 0.001 level. To proceed we make the (clearly unrealistic) assumption that all zip code areas have the same number of car insured  by the discussed insurer. Under this assumption the distribution of the excess premiums paid by the two group (after controling for the risk) is plotted in Figure \ref{fig:randinsured}(a). In this semi-artificial setup, a white customer pays, on the average 1.54 USD less than predicted, while a minority group members pays 1.65 USD more with standard deviations equal to 25.7 and 29.2 USD respectively.

Since the group membership is concealed from the insurer, this cannot be corrected directly. We suggest that in such a situation a cross-subsidization between zip code areas, where the premium would have a component which is proportional to the deviation of $\bq$ from its mean and a randomized component proportional to $\bq$. Practically, randomization can be achieved by making the premium depend slightly on hardly relevant information about the customer or the location. In Figure \ref{fig:randinsured}(b) we present the result of this process. Both the white and non-white groups have the same mean and standard deviation (0.0 and 29.8 USD respectively).
\begin{figure}
  \begin{subfigure}[b]{0.48\textwidth}
    \includegraphics[width=\textwidth]{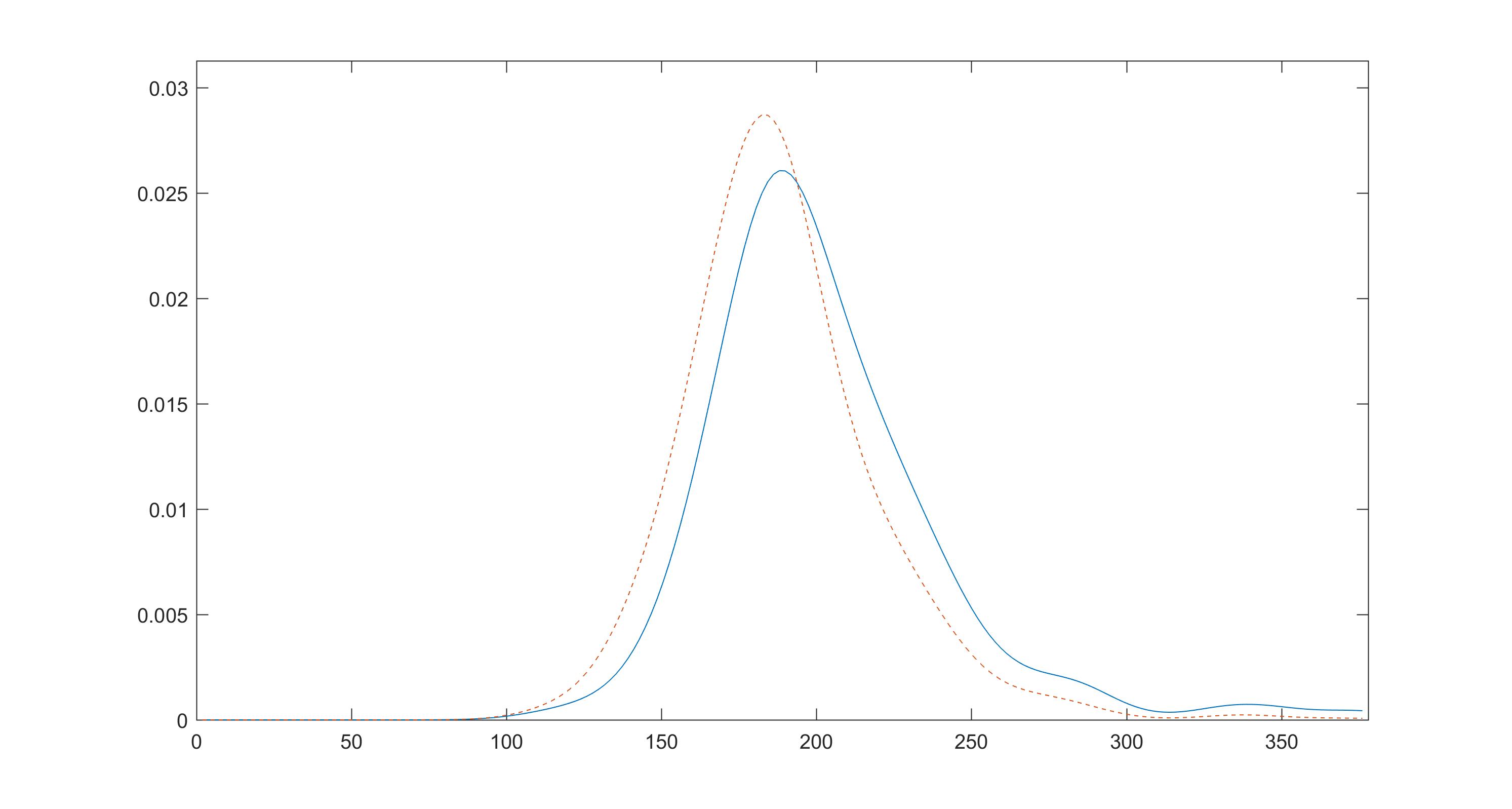}
    \caption{}
  \end{subfigure}
  ~
  \begin{subfigure}[b]{0.48\textwidth}
    \includegraphics[width=\textwidth]{markovkernel500}
    \caption{}
  \end{subfigure}

  \caption{Car insurance premium distribution before and after correction. The red broken line is for the white-non-Hispanic group. The blue full line are the rest.}
  \label{fig:randinsured}
\end{figure}

\section{Summary and discussion}

We identified conditional parity as a general notion of non-discrimination in machine learning. It formalizes the implicit comparison in claims of discrimination and is applicable beyond supervised learning. It also includes many recently proposed notions of non-discrimination, including counterfactual ones.

The main takeaway for practitioners is the necessity of specifying not only protected attribute but also discriminatory ones in any rigorous notion of non-discrimination. Ignoring the discriminatory attributes may lead to ambiguous definitions. Consider the recent debate on whether no sex-based discrimination implies no discrimination based on sexual orientation \citep{Thayer2017Hively}. In {\it Hively v. Ivy Tech}, the majority opinion expressed ``common-sense reality [is] that it is actually impossible to discriminate on the basis of sexual orientation without discriminating on the basis of sex''. However, by letting the gender to which one is attracted to be the discriminatory attribute, we see that it is indeed possible to discriminate on sexual orientation but not on gender. The ambiguity in the prohibition of sex-based discrimination is in the non-specification of discriminatory attributes.

\begin{example}
Let $\ba$ be gender and $\bz$ be the gender to which one is attracted to. Intuitively, no discrimination based on the gender requires the segments of the population on the same row of Table \ref{tab:gender-sexual-orientation} be treated equally, while no discrimination based on sexual orientation requires the segments on the same column be treated equally.

\begin{table}
  \centering
  \caption{}
  \vspace{6pt}
  \begin{tabular}{l|g|g|}
  \rowcolor{white} \multicolumn{1}{l}{} & \multicolumn{1}{l}{$\ba=$ female} & \multicolumn{1}{l}{$\ba=$ male} \\[12pt] \hhline{~|-|-|}
  \rowcolor{white} $\bz=$ female & \parbox[t]{60pt}{homosexual\\female} & \parbox[t]{60pt}{heterosexual\\male} \\[12pt] \hhline{~|-|-|}
  $\bz=$ male & \parbox[t]{60pt}{heterosexual\\female} & \parbox[t]{60pt}{homosexual\\male} \\[12pt] \hhline{~|-|-|}
  \end{tabular}
  \label{tab:gender-sexual-orientation}
\end{table}
\end{example}

Finally, we mention that CP is amenable to statistical analysis. We studied randomization as a general approach to achieving CP, as well as a kernel-based approach to check whether the output of a black-box machine learning algorithm satisfies CP. Most prior work on non-discrimination in machine learning has focused on designing non-discriminatory machine learning algorithms. However, to enforce non-discrimination, methods to detect violations are crucial, and we look forward to developments in future work.

\appendix
\section{Proof of auxiliary lemmas}

\begin{proof}[Proof of Lemma \ref{lem:hsic-test-stats}]
To keep things simple, we evaluate $\|\hSigma_{x,a\mid z}\|_{\HS}^2$ instead of $\|\hSigma_{(x,z),a\mid z}\|_{\HS}^2$. To obtain an expression of $\|\hSigma_{(x,z),a\mid z}\|_{\HS}^2$, simply replace $\bK_x$ by $\bK_{x,z} = \bK_x\cdot\bK_z$. We also abuse notation and denote linear combinations of the form $\sum_{i=1}^n\alpha_i k_x(\bx_i,\cdot)$ by $X\alpha$, where
\[
\bX = \begin{bmatrix} \mid & & \mid \\ k_x(\bx_1,\cdot) & \dots & k_x(\bx_n,\cdot) \\ \mid & & \mid\end{bmatrix}
\]
is an ``infinite matrix''. In this notation, we have
\[
\begin{aligned}
\hSigma_{x,a} &= \frac1n\sum_{i=1}^n(k_x(\bx_i,\cdot) - \hmu_x)\otimes((k_a(\ba_i,\cdot) - \hmu_a)) \\
&= \frac1n\sum_{i=1}^n(\bX(e_i - {\textstyle\frac1n\ones_n}))\otimes(\bA(e_i - {\textstyle\frac1n\ones_n})) \\
&= \frac1n\sum_{i=1}^n(\bX M_ne_i)\otimes(\bA M_ne_i),
\end{aligned}
\]
where $M_n:= {\textstyle I_n - \frac1n\ones_n\ones_n^T}$. The (squared) HS norm of the $\hSigma_{(x,z),a\mid z}$ is
\begin{equation}
\begin{aligned}
\langle\hSigma_{x,a\mid z},\hSigma_{x,a\mid z}\rangle_{\HS} &= \langle\hSigma_{x,a},\hSigma_{x,a}\rangle_{\HS} - 2\langle\hSigma_{x,y},\hSigma_{x,z}(\hSigma_{z,z} + \lambda I)^{-1}\hSigma_{z,a}\rangle_{\HS} \\
&+ \langle\hSigma_{x,z}(\hSigma_{z,z} + \lambda I)^{-1}\hSigma_{z,a},\hSigma_{x,z}(\hSigma_{z,z} + \lambda I)^{-1}\hSigma_{z,a}\rangle_{\HS}.
\end{aligned}
\label{eq:squared-HS-norm}
\end{equation}

For now, we focus on evaluating the first term. Since $\hSigma_{x,a}$ is a mapping from a subspace $\cS_a$ of $\cH_a$ to an subspace $\cS_x$ of $\cH_x$, we may restrict to the subspaces. Let $\{u_i\}_{i=1}^n$ be an orthonormal basis of $\cS_a$. We have
\[
u_i = \sum_{j=1}^n\alpha_{i,j}(k_a(\ba_i,\cdot) - \hmu_a) = \bA({\textstyle I_n - \frac1n\ones_n\ones_n^T})\alpha_i = \bA M_n\alpha_i,
\]
where $\alpha_i = \begin{bmatrix}\alpha_{i,1} & \dots & \alpha_{i,n}\end{bmatrix}^T$, and
\[
\begin{aligned}
\hSigma_{x,a}u_i &= ({\textstyle\frac1n\sum_{j=1}^n(\bX M_ne_j)\otimes(\bA M_ne_j)})(\bA M_n\alpha_i) \\
&= \frac1n\sum_{j=1}^n((\bX M_ne_j)\otimes(\bA M_ne_j))(\bA M_n\alpha_j) \\
&= \frac1n\sum_{j=1}^n\langle \bA M_ne_j,\bA M_n\alpha_i\rangle_a(\bX M_ne_j).
\end{aligned}
\]
By the reproducing property, we have
\[
\begin{aligned}
\langle \bA\alpha,\bA\beta\rangle_a &= \sum_{i=1}^n\sum_{j=1}^n\alpha_i\beta_j\langle k_a(\ba_i,\cdot),k_a(\ba_j,\cdot)\rangle_a\\
&= \sum_{i=1}^n\sum_{j=1}^n\alpha_i\beta_jk_a(\ba_i,\ba_j) \\
&= \alpha^T\bG_a\beta,
\end{aligned}
\]
where $\bG_a$ is the Gram matrix whose entries are $\bigl[G\bigr]_{i,j} = k(\ba_i,\ba_j)$, for any $\alpha,\beta\in\reals^n$. Thus
\[
\hSigma_{x,a}u_i = \frac1n\sum_{j=1}^n(e_j^TM_n\bG_a M_n\alpha_i)(\bX M_ne_j) = \frac1n\bX M_n\bG_a M_n\alpha_i = \frac1n\bX\bK_a\alpha_i,
\]
where $K_a := M_n\bG_a M_n$ is the centered Gram matrix. The first term in \eqref{eq:squared-HS-norm} is
\[
\begin{aligned}
\langle\hSigma_{x,a},\hSigma_{x,a}\rangle_{\HS} &= \sum_{i=1}^n\langle\hSigma_{x,a}u_i,\hSigma_{x,a}u_i\rangle_x \\
&= \frac{1}{n^2}\sum_{i=1}^n\langle \bX\bK_a\alpha_i,\bX\bK_a\alpha_i\rangle_x, \\
&= \frac{1}{n^2}\alpha_i^T\bK_a\bG_x\bK_a\alpha_i \\
&= \frac{1}{n^2}\alpha_i^T\bK_a\bK_x\bK_a\alpha_i,
\end{aligned}
\]
where the third step is a consequence of the reproducing property and the fourth step is a consequence of $\cR(\bK_a) \subset \cR(M_n)$. Recall $\{u_i\}_{i=1}^n$ is an orthonormal basis of $\cS_a$:
\[
\langle u_i,u_i\rangle_a = \langle \bA M_n\alpha_i, \bA M_n\alpha_j\rangle_a= \alpha_i^T\bK_a\alpha_j = \begin{cases}1 & i=j \\ 0 & i\ne j\end{cases},
\]
We see that $\{\bK_a^{\frac12}\alpha_i\}_{i=1}^n$ is an orthonormal basis of $\reals^n$, which implies
\[
\begin{aligned}
\langle\hSigma_{x,a},\hSigma_{x,a}\rangle_{\HS} &= \frac{1}{n^2}\tr(\bK_a^{\frac12}\bK_x\bK_a^{\frac12}) = \frac{1}{n^2}\tr(\bK_x\bK_a).
\end{aligned}
\]
By similar calculations, it is possible to show that the second and third terms are
\[
\begin{aligned}
&\langle\hSigma_{x,a},\hSigma_{x,z}(\hSigma_{z,z} + \lambda I_z)^{-1}\hSigma_{z,a}\rangle_{\HS} \\
&\quad= \frac{1}{n^2}\tr(\bK_x\bK_z(\bK_z + \lambda M_n)^\dagger\bK_y), \\
&\langle\hSigma_{x,z}(\hSigma_{z,z} + \lambda I_z)^{-1}\hSigma_{z,y},\hSigma_{x,z}(\hSigma_{z,z} + \lambda I_z)^{-1}\hSigma_{z,a}\rangle_{\HS} \\
&\quad= \frac{1}{n^2}\tr((\bK_z + \lambda M_n)^\dagger\bK_z\bK_x\bK_z(\bK_z + \lambda M_n)^\dagger\bK_a).
\end{aligned}
\]
\end{proof}

\bibliographystyle{imsart-nameyear}
\bibliography{yuekai}

\end{document}